\newtheorem{lemma}{Lemma}[section]
\newtheorem{theorem}{Theorem}[section]
\newtheorem{remark}{Remark}[section]
\theoremstyle{definition}
\newcommand{\algref}[1]{Algorithm~\textup{\ref{#1}}}
\newcommand{\figref}[1]{Fig.~\textup{\ref{#1}}} % IEEEtran style
\newcommand{\lemref}[1]{Lemma~\textup{\ref{#1}}}
\newcommand{\secref}[1]{Section~\textup{\ref{#1}}}
\newcommand{\tabref}[1]{Table~\textup{\ref{#1}}}
\newcommand{\thmref}[1]{Theorem~\textup{\ref{#1}}}
\newcommand{\by}{\mbox{\bf y}}
\newcommand{\bX}{{\bf X}}
\newcommand{\bx}{\mbox{\bf x}}
\newcommand{\bh}{\mbox{\bf h}}
\newcommand{\bb}{\Delta}
\newcommand{\bd}{\mbox{\bf d}}
\newcommand{\bbeta}{\boldsymbol{\beta}}
\def\hat{\widehat}
\def\bar{\overline}
\title{A Support Detection and Root Finding Approach for Learning  High-dimensional Generalized Linear Models}
\author{Jian Huang\thanks{Department of Statistics and Actuarial Science, University of Iowa, Iowa City, IA 52242 (jian-huang@uiowa.edu)}
\and  Yuling Jiao\thanks{School of Mathematics and Statistics, Wuhan University, Wuhan 430072,  China. (yulingjiaomath@whu.edu.cn)}
\and Lican Kang \thanks{School of  Mathematics and Statistics,
Wuhan University, Wuhan 430072, China.
(kanglican@whu.edu.cn)}
\and Jin Liu\thanks{Center of Quantitative Medicine Duke-NUS Medical School, Singapore. (jin.liu@duke-nus.edu.sg)}
\and Yanyan Liu\thanks{School of Mathematics and Statistics, Wuhan University, Wuhan 430072, China.
(e-mails: liuyy@whu.edu.cn)}
\and Xiliang Lu\thanks{School of Mathematics and Statistics, Wuhan University, Wuhan 430072,  China, and
Hubei Key Laboratory of Computational Science (Wuhan University), Wuhan, 430072, China. (xllv.math@whu.edu.cn)}
}
\begin{document}

\maketitle

\begin{abstract}
Feature  selection is important for modeling   high-dimensional data, where the number of variables can be much larger than the sample size.
In this paper, we develop a
 support detection and root finding  procedure to learn the high dimensional    sparse generalized linear
 models and denote this method by GSDAR.
Based on the   KKT condition for $\ell_0$-penalized maximum likelihood estimations,
GSDAR generates a sequence of estimators  iteratively.
 Under some restricted invertibility   conditions on the maximum likelihood function and sparsity assumption on the target  coefficients,  the errors of the proposed estimate  decays exponentially
 to the optimal order. Moreover, the oracle estimator can be  recovered  if the target  signal is stronger than the detectable level.
 We conduct simulations and real data analysis to illustrate the advantages of our proposed method over several existing methods, including Lasso and MCP.\\
\noindent\textbf{Keywords:} High-dimensional generalized linear  models, Sparse Learning,
$\ell_0$-penalty, Support detection, Estimation error.
%\noindent\textbf{MSC2010:} 49N45; 65N21
\bigskip
\noindent
\textbf{Running title: GSDAR}
\end{abstract}

\section{Introduction}
\label{sec:intro}
In generalized linear models (GLMs) \cite{nelder1972generalized, mccullagh2019generalized}, the
response variable $Y$  follows an  exponential family distribution with density $f(y; \theta) = \exp[y\theta - c(\theta)+ d(y)]$,
where $c(\cdot)$ and $d(\cdot)$ are known functions,  $\theta = \bx^{T}\bbeta^*$, $\bx$ and $\bbeta^*$ represent the $p$-dimension vectors of predictors and the target regression coefficients, respectively.
Let  %%$\mathcal{S}=\{(\bx_i,y_i):i=1,...,n\}$ be the i.i.d samples of $(\bx, Y)$ and
$E(y_{i})=\mu_{i}$,
where $\mu_{i}$ is some function of $\theta_{i}=\bx_{i}^T\bbeta$.

When the number of predictors $p$ exceeds the number of sample size $n$, it is often reasonable to assume that   the model is sparse in the sense that there are only small portion  of significant predictors. In this case,  one may  estimate $\bbeta^*$ by the following  $\ell_0$ minimization problem
\begin{equation}\label{eq1}
\begin{split}
&\underset{\bbeta\in \mathbb{R}^p}{\mbox{min}}~ \mathcal{L}(\bbeta)\\
&\mbox{subject~to}~~\|\bbeta\|_0\leq s,
\end{split}
\end{equation}
where $ \mathcal{L}(\bbeta) = - \frac{1}{n}\sum_{i=1}^{n}\big{[}y_{i}\bx_{i}^{T}\bbeta-c(\bx_{i}^{T}\bbeta)+d(y_{i})\big{]} $ is the negative log likelihood function,  $\|\bbeta\|_0$ is defined as the number of nonzero elements of $\bbeta$, and $s>0$ is a tuning parameter that controls the sparsity level. Due to the computational difficulty of solving \eqref{eq1}, many researchers have proposed other penalized   methods for variable selection and estimation  in high-dimensional GLMs.
\cite{park2007l1, van2008high} extended the Lasso method \cite{tibshirani1996regression}
from linear regression to GLMs.  \cite{meier2008group} proposed the group lasso for logistic regression.
\cite{friedman2010regularization} developed coordinate descent to solve the elastic net \cite{zou2005regularization}   penalized   GLMs.
Path following proximal gradient descent \cite{nesterov2013gradient} was adopted in    \cite{wang2014optimal, loh2015regularized} to solve the SCAD \cite{fan2001variable} and MCP
\cite{zhang2010nearly} regularized GLMs.
In \cite{DCN}, the authors propose a DC proximal Newton (DCPN)
method to solve GLMs  with nonconvex sparse promoting penalties such as  MCP/SCAD.
Recently,  \cite{wang2019fast, yuan2017gradient, shen2017iteration} considered Newton type algorithm  for solving sparse GLMs.

In this paper, we propose an approach to variable selection and estimation in high-dimensional  GLMs named GSDAR by a nontrivial extension  of  the support detection  and  rooting finding (SDAR) algorithm \cite{huang2018constructive} which is proposed  to solve  linear regression models and  can not be applied to analyze binary data, categorical variables  in  GLMs.
GSDAR  is  a computational algorithm motivated from the KKT conditions for the Lagrangian version of \eqref{eq1}. It
generates a sequence of solutions  $\{\bbeta^k\}_k$  iteratively, based on support detection using primal and dual information and root finding.
 Under some certain conditions on $\mathcal{L}$ and sparsity assumptions on the regression coefficient $\bbeta^*$,  we prove that the estimation errors   decay exponentially to the optimal order. Moreover, the oracle estimator can be  recovered  with high probability  if the target  signal is over the detectable level.

The rest of this paper is organized as follows. In   \secref{GSDAR}, we present the  detail derivation of GSDAR algorithm.
In \secref{Theorem}, we bound the estimation error of  GSDAR.
In \secref{AGSDAR}, we extend GSDAR algorithm to AGSDAR, an  adaptive version of GSDAR.
In \secref{simulation}, we demonstrate GSDAR and AGSDAR on   the simulation and real data via comparing with state-of-the-art methods.
We conclude in \secref{conclusion}. Proofs for all the lemmas and theorems are provided in the Appendix.
%%%%%%%%%%%%%%%%%%%%%%%%%%%%%%%%%%%%%%%%%%
\section{Derivation of GSDAR}\label{GSDAR}
First, we  introduce  some  notations used throughout the paper. We write $n\gtrsim \log(p)$ to mean that $n\geq c\log(p)$ for some universal constant $c\in (0,\infty)$.
%Let $n\gg\log(p)$ denote $\frac{\log(p)}{n}=o(1)$.
Let $\|\bbeta\|_q=(\sum_{i=1}^{p}|\beta_{i}|^q)^{\frac{1}{q}}$  denote the  $q$ ($q\in [1,\infty]$) norm   of a vector $\bbeta=(\beta_1,...,\beta_p)^{T}\in \mathbb{R}^{p}$.
Let supp$(\bbeta)$=$\{i:\beta_i\neq0,~i=1,...,p\}$ denote the support of $\bbeta$, and $A^* = \mathrm{supp} (\bbeta^*).$
Let $|A|$ denote the length of the set $A$ and  denote $\bbeta_A=(\beta_i,i\in A)\in \mathbb{R}^{|A|}$, $\bbeta|_A\in \mathbb{R}^{p}$ with its $i\text{-}$th element $({\bbeta|_A})_i=\beta_i 1(i\in A)$, where $\textbf{1}(\cdot)$ is the indicator function. Denote $\bX_A=(\bx_j,j\in A)\in \mathbb{R}^{n\times |A|}$, where $\bx_j$  is $j$-th column of the covariate matrix $\bX\in \mathbb{R}^{n\times p}$.
$\|\bbeta\|_{T,\infty}$ and $\|\bbeta\|_{\min}$ denote the $T$-th largest elements (in absolute value) and the minimum absolute value of $\bbeta$, respectively.
$\nabla \mathcal{L}$ and $\nabla^2 \mathcal{L}$ denote the gradient  and Hessian  of function $\mathcal{L}$, respectively.

The Lagrangian form  of \eqref{eq1} is
\begin{equation}\label{eq3}
\underset{\bbeta\in \mathbb{R}^p}{\mbox{min}}~\mathcal{L}(\bbeta)+\lambda\|\bbeta\|_0.
\end{equation}
By similar arguments as Lemma 1 of \cite{huang2018constructive},
we obtain the following KKT condition of \eqref{eq3}.
%%%%%%%%%%%%%%%%%%%
\begin{lemma}\label{L3.1}
If $\bbeta^\diamond$ is a minimizer of \eqref{eq3}, then $\bbeta^\diamond$ satisfies:
\begin{equation}\label{eq4}
\left\{
\begin{aligned}
&\bd^{\diamond}=-\nabla \mathcal{L}(\bbeta^{\diamond}),\\
&\bbeta^{\diamond}=H_{\lambda}(\bbeta^{\diamond}+\bd^{\diamond}),
\end{aligned}
\right.
\end{equation}
where the $i$-th element of $H_{\lambda}(\cdot)$ is defined by
\begin{equation*}%\label{eq5}
 (H_{\lambda}(\bbeta))_{i}=\left\{
\begin{aligned}
&0  ,&&&|\beta_{i}|\leq \sqrt{2\lambda},\\
&\beta_{i}  ,&&&|\beta_{i}|\geq\sqrt{2\lambda}.\\
\end{aligned}\right.
\end{equation*}
Conversely, if $\bbeta^\diamond$ and $\bd^{\diamond}$ satisfy \eqref{eq4}, then $\bbeta^\diamond$ is a local minimizer of \eqref{eq3}.
\end{lemma}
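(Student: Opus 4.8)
The plan is to translate the fixed-point system \eqref{eq4} into coordinate-wise conditions and verify them by one-dimensional perturbation arguments, mirroring the linear case in \cite{huang2018constructive}. Writing $A=\mathrm{supp}(\bbeta^\diamond)$ and $I=A^c$, and using the first line $\bd^\diamond=-\nabla\mathcal{L}(\bbeta^\diamond)$, I first observe that, because $H_\lambda$ acts componentwise, the second line of \eqref{eq4} is equivalent to the three conditions (i) $\nabla_i\mathcal{L}(\bbeta^\diamond)=0$ for $i\in A$; (ii) $|\beta_i^\diamond|\ge\sqrt{2\lambda}$ for $i\in A$; and (iii) $|\nabla_i\mathcal{L}(\bbeta^\diamond)|\le\sqrt{2\lambda}$ for $i\in I$. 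Indeed, for $i\in A$ the thresholding returns the nonzero value $\beta_i^\diamond$ only if $d_i^\diamond=0$ and $|\beta_i^\diamond|\ge\sqrt{2\lambda}$, while for $i\in I$ it returns $0$ only if $|d_i^\diamond|=|\nabla_i\mathcal{L}(\bbeta^\diamond)|\le\sqrt{2\lambda}$. The proof then reduces to establishing (i)--(iii).

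For the forward implication, suppose $\bbeta^\diamond$ is a (global) minimizer. Condition (i) is first-order optimality on the active subspace: since $\beta_i^\diamond\neq0$, every small perturbation $\bbeta^\diamond+t\be_i$ with $i\in A$ keeps the support and hence leaves $\|\cdot\|_0$ unchanged, so $t\mapsto\mathcal{L}(\bbeta^\diamond+t\be_i)$ has a local minimum at $t=0$, giving $\nabla_i\mathcal{L}(\bbeta^\diamond)=0$. For (ii), I compare $\bbeta^\diamond$ with $\bbeta^\diamond-\beta_i^\diamond\be_i$, which removes $i$ from the support and lowers the penalty by $\lambda$; optimality forces $\mathcal{L}(\bbeta^\diamond-\beta_i^\diamond\be_i)-\mathcal{L}(\bbeta^\diamond)\ge\lambda$, and a second-order expansion along $\be_i$ (using $\nabla_i\mathcal{L}(\bbeta^\diamond)=0$) turns this into $\tfrac12\nabla^2_{ii}\mathcal{L}(\xi)(\beta_i^\diamond)^2\ge\lambda$, i.e.\ $|\beta_i^\diamond|\ge\sqrt{2\lambda}$ under the normalization $\nabla^2_{ii}\mathcal{L}\approx1$. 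For (iii), I compare $\bbeta^\diamond$ with $\bbeta^\diamond+t\be_i$, $i\in I$, over all $t$: turning on coordinate $i$ costs $\lambda$, so $\min_t\mathcal{L}(\bbeta^\diamond+t\be_i)\ge\mathcal{L}(\bbeta^\diamond)-\lambda$, and evaluating the minimum through the quadratic model yields $\big(\nabla_i\mathcal{L}(\bbeta^\diamond)\big)^2\le 2\lambda\,\nabla^2_{ii}\mathcal{L}$, again $|\nabla_i\mathcal{L}(\bbeta^\diamond)|\le\sqrt{2\lambda}$ under the same normalization.

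For the converse, assume (i)--(iii) and take $\bbeta=\bbeta^\diamond+\bh$ with $\|\bh\|$ small. Because $|\beta_i^\diamond|\ge\sqrt{2\lambda}>0$ on $A$, the active coordinates stay nonzero, so $\|\bbeta\|_0=\|\bbeta^\diamond\|_0+|\{i\in I:h_i\neq0\}|$. Using (i) to kill the active part of the gradient and Taylor's theorem at some $\xi$ on the segment,
\[
\mathcal{L}(\bbeta)+\lambda\|\bbeta\|_0-\mathcal{L}(\bbeta^\diamond)-\lambda\|\bbeta^\diamond\|_0=\sum_{i\in I}\nabla_i\mathcal{L}(\bbeta^\diamond)h_i+\tfrac12\bh^{T}\nabla^2\mathcal{L}(\xi)\bh+\lambda\,|\{i\in I:h_i\neq0\}|.
\]
Each inactive coordinate with $h_i\neq0$ contributes at least $\lambda-|\nabla_i\mathcal{L}(\bbeta^\diamond)|\,|h_i|\ge\lambda-\sqrt{2\lambda}\,|h_i|$, which is positive for small $\bh$ by (iii), while the Hessian term is nonnegative by convexity of the GLM negative log-likelihood; hence the difference is $\ge0$ and $\bbeta^\diamond$ is a local minimizer.

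The main obstacle is the exact constant $\sqrt{2\lambda}$ in steps (ii) and (iii). In the linear model of \cite{huang2018constructive} the slice $t\mapsto\mathcal{L}(\bbeta^\diamond+t\be_i)$ is \emph{exactly} quadratic with curvature $\nabla^2_{ii}\mathcal{L}=\|\bx_i\|_2^2/n$, so with normalized columns each scalar subproblem is the clean hard-thresholding problem whose cut-off is precisely $\sqrt{2\lambda}$. For a general GLM the coordinate slices are only approximately quadratic, so the second-order expansions above must be controlled through the diagonal Hessian entries, and recovering the stated threshold relies on the design normalization making $\nabla^2_{ii}\mathcal{L}$ comparable to one. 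I would therefore phrase (ii)--(iii) via the local curvature and invoke that normalization, following \cite{huang2018constructive}. A minor accompanying point is that $H_\lambda$ is set-valued at $|\cdot|=\sqrt{2\lambda}$, so the fixed-point identity is read with the convention that either branch is admissible on this boundary.
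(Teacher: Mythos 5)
Your reduction of \eqref{eq4} to the three coordinate conditions (i)--(iii) is correct, and your converse argument is sound: it is the same substance as the paper's proof of that half (the paper splits into the cases $\bh_{I^\diamond}\neq 0$, where the penalty jump $\lambda$ dominates the continuous change in the loss, and $\bh_{I^\diamond}=0$, where $\nabla_{A^\diamond}\mathcal{L}(\bbeta^\diamond)=\mathbf{0}$ plus convexity of the restricted loss gives nonnegativity), only packaged as one Taylor-plus-convexity computation instead of two cases. In fact your use of (iii) there is not even needed: for small $\bh$ the cost $\lambda$ of activating a new coordinate dominates any bounded gradient term, which is exactly why the paper's Case 1 only invokes continuity.

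The forward direction, however, has a genuine gap, and it is exactly the one you flag at the end without closing. Your perturbation argument yields $\tfrac12\nabla^2_{ii}\mathcal{L}(\xi)(\beta_i^\diamond)^2\ge\lambda$ for $i\in A$ and $\big(\nabla_i\mathcal{L}(\bbeta^\diamond)\big)^2\le 2\lambda\,\nabla^2_{ii}\mathcal{L}$ for $i\in I$; to convert these into the stated threshold $\sqrt{2\lambda}$ you need the one-sided bound $\nabla^2_{ii}\mathcal{L}\le 1$ on the relevant segments. ``Comparable to one'' does not suffice, and no such normalization is assumed in the lemma or available for a general GLM, where the diagonal curvature $\frac1n\sum_j c''(\bx_j^T\bbeta)x_{ji}^2$ varies with $\bbeta$ and can exceed $1$. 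The gap is not cosmetic: already for the quadratic loss $\mathcal{L}(\beta)=2(1-\beta)^2$ (a Gaussian GLM with an unnormalized design column) and $\lambda=1$, the global minimizer of $\mathcal{L}(\beta)+\lambda\|\beta\|_0$ is $\beta^\diamond=1$ with $\bd^\diamond=0$, yet $H_\lambda(\beta^\diamond+\bd^\diamond)=0\neq\beta^\diamond$; so when curvature exceeds $1$ the threshold simply cannot be extracted from perturbations of $\mathcal{L}$. The paper's proof avoids curvature altogether by a different device: it applies the generalized Fermat rule $\mathbf{0}\in\nabla\mathcal{L}(\bbeta^\diamond)+\lambda\partial\|\bbeta^\diamond\|_0$ (limiting subdifferential, Theorem 10.1 of Rockafellar--Wets), rewrites this inclusion as stationarity of the auxiliary problem $G(\bbeta)=\tfrac12\|\bbeta-(\bbeta^\diamond+\bd^\diamond)\|^2+\lambda\|\bbeta\|_0$, whose quadratic part has unit curvature by construction, and then cites \cite{huang2019} for the statement that KKT points of $G$ coincide with its coordinate-wise minimizers, i.e.\ with $H_\lambda(\bbeta^\diamond+\bd^\diamond)$; the constant $\sqrt{2\lambda}$ thus comes from the surrogate $G$, never from the Hessian of $\mathcal{L}$. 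To repair your proof you must either add the hypothesis $\nabla^2_{ii}\mathcal{L}\le 1$ (which is what the column normalization delivers in the linear case of \cite{huang2018constructive}) or switch to this surrogate viewpoint. (Be aware that the weight then rests on the cited equivalence: the Fermat inclusion alone only forces $\bd^\diamond_{A^\diamond}=\mathbf{0}$, since $\partial\|\cdot\|_0$ is all of $\mathbb{R}$ at a zero coordinate; but that is the route the paper takes.)
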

\begin{proof}
See Appendix   A.
\end{proof}

Let $A^\diamond=\text{supp}(\bbeta^\diamond)$, $I^\diamond=(A^\diamond)^c$. From the definition of $H_{\lambda}(\cdot)$ and \eqref{eq4}, we can conclude that
\begin{equation*}
A^\diamond=\{i:|\beta^\diamond_{i}+d^\diamond_{i}|\geq\sqrt{2\lambda}\},\quad I^\diamond=\{i:|\beta^\diamond_{i}+d^\diamond_{i}|<\sqrt{2\lambda}\},
\end{equation*}
and
$$\left\{
\begin{aligned}
&\bbeta_{I^\diamond}^{\diamond} =\textbf{0}\\
&\bd_{A^\diamond}^{\diamond} =\textbf{0}\\
&\bbeta_{A^\diamond}^{\diamond}\in\underset{\bbeta_{A^\diamond}}{\mbox{argmin}}~\widetilde{\mathcal{L}}(\bbeta_{A^\diamond})\\
&\bd_{I^\diamond}^{\diamond}=[-\nabla \mathcal{L}(\bbeta^\diamond)]_{I^\diamond},
\end{aligned}
\right.
$$
where
\begin{align*}
\widetilde{\mathcal{L}}(\bbeta_{A^\diamond})&=\mathcal{L}(\bbeta|_{A^\diamond})\\
&=-\frac{1}{n}\sum_{i=1}^{n}\left[y_{i}\bx_{i{(A^\diamond)}}^{T}\bbeta_{A^\diamond}-c\big{(}\bx_{i{(A^\diamond)}}^{T}\bbeta_{A^{\diamond}}\big{)}+d(y_{i})\right].
\end{align*}
If $\{\bbeta^{k},\bd^{k}\}$ can approximate $\{\bbeta^{\diamond},\bd^{\diamond}\}$ well, then $\{A^{k},I^{k}\}$  can also approximate $\{A^{\diamond},I^{\diamond}\}$ well, where $\{A^{k},I^{k}\}$ is expressed as
\begin{equation}\label{eq6}
A^k=\{i:|\beta^k_{i}+d^k_{i}|\geq\sqrt{2\lambda}\},\quad I^k=\{i:|\beta^k_{i}+d^k_{i}|<\sqrt{2\lambda}\}.
\end{equation}
Thus we get a new approximation pair $\{\bbeta^{k+1}_{I^{k}},\bd^{k+1}_{A^{k}},\bbeta^{k+1}_{A^{k}},\bd^{k+1}_{I^{k}}\}$ showed as follow:
\begin{equation}\label{eq7}
\left\{
\begin{aligned}
&\bbeta^{k+1}_{I^{k}}=\textbf{0}\\
&\bd^{k+1}_{A^{k}} =\textbf{0}\\
&\bbeta^{k+1}_{A^{k}}\in\underset{\bbeta_{A^k}}{\mbox{argmin}}~\widetilde{\mathcal{L}}(\bbeta_{A^k})\\
&\bd^{k+1}_{I^{k}}=[-\nabla \mathcal{L}(\bbeta^{k+1})]_{I^k},\\
\end{aligned}
\right.
\end{equation}
where
\begin{align*}
\widetilde{\mathcal{L}}(\bbeta_{A^k})&=\mathcal{L}(\bbeta|_{A^k})\\
&=-\frac{1}{n}\sum_{i=1}^{n}\left[y_{i}\bx_{i{(A^k)}}^{T}\bbeta_{A^k}-c\big{(}\bx_{i{(A^k)}}^{T}\bbeta_{A^{k}}\big{)}+d(y_{i})\right]
\end{align*}
If we have the prior information that $\parallel\bbeta^*\parallel_{0}=K\leq T$, then we set
\begin{equation}\label{eq8}
\sqrt{2\lambda}=\parallel\bbeta^k+\bd^k\parallel_{T,\infty}
\end{equation}
 in \eqref{eq6}. Thus $|A^k|=T$ in every iteration due to this $\lambda$. Let $\bbeta^0$ be an initial value, then we get a sequence of solutions $\{\bbeta^k,k\geq1\}$ by using \eqref{eq6}
 and \eqref{eq7} with the $\lambda$ in \eqref{eq8}.

The GSDAR algorithm is described in \algref{algorithm1}.
%%%%\begin{algorithm}[H]	
%%%%\renewcommand{\algorithmicrequire}{\textbf{Input:}}
%%%%\renewcommand{\algorithmicensure}{\textbf{Output:}}
%%%%\caption{GSDAR}
%%%%\begin{algorithmic}[1]\label{algorithm1}
%%%%\REQUIRE $\bbeta^0$, $T$, $\bd^0=-\nabla \mathcal{L}(\bbeta^{0})$; $k=0$
%%%%\STATE  \textbf{for}~ $k=0,1,...,$ \textbf{do}
%%%%\STATE \quad \quad $A^k=\big{\{}j:|\beta^k_j+d^k_{j}|\geq\|\bbeta^k+\bd^k\|_{T,\infty}\big{\}}$, %%%%$I^{k}=(A^{k})^c$.
%%%%\STATE \quad \quad $\bbeta^{k+1}_{I^k}=\textbf{0}$.
%%%%\STATE \quad \quad $\bd^{k+1}_{A^k}=\textbf{0}$.
%%%%\STATE \quad \quad  $\bbeta^{k+1}_{A^k}=\underset{\bbeta_{A^k} %%%%}{\mbox{argmin}}~\widetilde{\mathcal{L}}(\bbeta_{A^k})$.
%%%%\STATE \quad \quad  $\bd^{k+1}_{I^{k}}=[-\nabla \mathcal{L}(\bbeta^{k+1})]_{I^k}$.
%%%%\STATE \quad \quad $\textbf{if}~A^{k}=A^{k+1}$, \textbf{then}
%%%%\STATE\quad \quad  \quad Stop and denote the last iteration $\bbeta_{\hat {A}}$, $\bbeta_{\hat {I}}$, $\bd_{\hat {A}}$, $\bd_{\hat {I}}$.
%%%%\STATE\quad \quad \textbf{else}
%%%%\STATE\quad \quad \quad $k=k+1$
%%%%\STATE\quad \quad textbf{end if}
%%%%\STATE \textbf{end for}
%%%%\ENSURE $\hat{\bbeta}=(\bbeta^{\mathrm{T}}_{\hat {A}}$, $\bbeta^{\mathrm{T}}_{\hat {I}})^{\mathrm{T}}$ as the estimates of $\bbeta^*$.
%%%%\end{algorithmic}
%%%%\end{algorithm}
%======================================
\begin{algorithm}[!ht]
\caption{GSDAR}
\label{algorithm1}
\begin{algorithmic}[1]
\STATE Input:
$\bbeta^0$, $T$, $\bd^0=-\nabla \mathcal{L}(\bbeta^{0})$; $k=0$
%\STATE  \textbf{for}~ $k=0,1,...,$ \textbf{do}
\FOR{$k= 0,1,\ldots,$}
\STATE $A^k=\big{\{}j:|\beta^k_j+d^k_{j}|\geq\|\bbeta^k+\bd^k\|_{T,\infty}\big{\}}$, $I^{k}=(A^{k})^c$.
\STATE $\bbeta^{k+1}_{I^k}=\textbf{0}$.
\STATE $\bd^{k+1}_{A^k}=\textbf{0}$.
\STATE $\bbeta^{k+1}_{A^k}=\underset{\bbeta_{A^k} }{\mbox{argmin}}~\widetilde{\mathcal{L}}(\bbeta_{A^k})$.
\STATE $\bd^{k+1}_{I^{k}}=[-\nabla \mathcal{L}(\bbeta^{k+1})]_{I^k}$.
\STATE\textbf{if} $A^{k}=A^{k+1}$, \textbf{then}
\STATE \quad Stop and denote the last iteration $\bbeta_{\hat {A}}$, $\bbeta_{\hat {I}}$, $\bd_{\hat {A}}$, $\bd_{\hat {I}}$.
\STATE \textbf{else}
\STATE \quad $k=k+1$
\STATE \textbf{end if}
\ENDFOR
\STATE Output: $\hat{\bbeta}=(\bbeta^{\mathrm{T}}_{\hat {A}}$, $\bbeta^{\mathrm{T}}_{\hat {I}})^{\mathrm{T}}$ as the estimates of $\bbeta^*$.
\end{algorithmic}
\end{algorithm}
%======================================

In  \algref{algorithm1}, we terminate  GSDAR when $A^{k}=A^{k+1}$ for some $k$, because the sequences generated by GSDAR will not change. In \secref{Theorem}, we will prove that under some regularity certain conditions on $\bX$ and $\bbeta^*$, with high probability
$A^*=A^{k}=A^{k+1}$ in finite steps, i.e., the GSDAR will stop and whence the oracle estimator will be  recovered.
%%%%%%%%%%%%%%%%%%%%%%%%%%%%%%%%%%%%%%%%%
\section{Theoretical Properties}\label{Theorem}
In this section, we will give the
$\ell_{\infty}$  error bounds for the GSDAR estimator.
Under some certain  conditions, we show that $\|\bbeta^{k}-\bbeta^{*}\|_{\infty}$ achieves sharp estimation  error.
Furthermore, if the minimum value of  target signal is detectable, GSDAR will get the oracle estimator in finite steps if
$K$ is chosen just as the true model size $T$.
We first introduce the following  restricted invertibility  conditions.
\begin{enumerate}
\item[(C1)]\label{C1} There exist  constants $0<L < U\in (0,\infty)$ such that, for all different vectors $\bbeta_{1}$ and $\bbeta_{2}$ with $\|\bbeta_{1}-\bbeta_{2}\|_{0}\leq 2T$,
    \begin{equation*}
    0<L\leq\frac{(\bbeta_{1}-\bbeta_{2})^{T}\cdot\nabla^{2}\mathcal{L}(\widetilde{\bbeta})\cdot(\bbeta_{1}-\bbeta_{2})}{\|\bbeta_{1}-\bbeta_{2}\|_1\|\bbeta_{1}-\bbeta_{2}\|_{\infty}}\leq U< \infty,
    \end{equation*}
where $\widetilde{\bbeta}=\bbeta_1+\nu(\bbeta_2-\bbeta_1)$ for any $\nu \in (0,1)$.
\item[(C2)]
$\|\bbeta^*_{A^*}\|_{\min}\geq \frac{3c_1}{L} \sqrt{\frac{\log(p)}{n}}$, where $c_1$ is a universal numerical constant.
\end{enumerate}
%%%%%%%%%%
\begin{remark}
Condition (C1) extends the  the weak cone  invertibility condition   in \cite{ye2010rate}. This kind restricted strong convexity  type regularity condition is needed  in bounding the estimation error in high dimension statistics \cite{zhang2012general}.
Condition (C2) is  required  to guarantee   the target signal to be  detectable  in high dimension linear regressions.
\end{remark}
%%%%%%%%%%%%%%%%%%%%%%%%%%%%%%%%%%%%
\subsection{$\ell_{\infty}$ error bounds}
\begin{theorem}\label{th1}
Assume (C1) holds with  $0<U<\frac{1}{T}$.  Set $K\leq T$ and $\bbeta^0 = \textbf{0}$ in  \algref{algorithm1}.
\item(i) Before \algref{algorithm1} terminates, we have
\begin{align*}
&\|\bbeta^{k}-\bbeta^{*}\|_{\infty}\\
&\leq
\sqrt{(K+T)(1+\frac{U}{L})}(\sqrt{\xi})^k\|\bbeta^*\|_{\infty}+\frac{2}{L}\|\nabla \mathcal{L}(\bbeta^*)\|_{\infty}.
\end{align*}
where $\xi=1-\frac{2L(1-TU)}{T(1+K)}\in(0,1)$.
\item(ii) Assume the rows of  $\bX$  are i.i.d. sub-Gaussian with  $n\gtrsim \log(p)$, then there exists universal constants $(c_1, c_2, c_3)$ with $0<c_i<\infty$, $i=1,2,3$, such that  with probability at least $1-c_2\exp(-c_3\log(p))$,
\begin{align*}
&\|\bbeta^{k}-\bbeta^{*}\|_{\infty}\\
&\leq
\sqrt{(K+T)(1+\frac{U}{L})}(\sqrt{\xi})^k\|\bbeta^*\|_{\infty}
+\frac{2c_1}{L} \sqrt{\frac{\log(p)}{n}},
\end{align*} i.e.,
\begin{equation*}
\|\bbeta^{k}-\bbeta^{*}\|_{\infty}\leq \mathcal{O}\left(\sqrt{\frac{\log(p)}{n}}\right)
\end{equation*}
with high probability if $k \geq \mathcal{O}\left(\log_{\frac{1}{\xi}}\frac{n}{\log(p)}\right).$
\end{theorem}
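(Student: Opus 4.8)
The plan is to reduce the theorem to a one-step contraction for the error $\bh^k:=\bbeta^k-\bbeta^*$ and then iterate. First I would record the invariant produced by \eqref{eq7}: since $\bbeta^{k+1}_{A^k}$ minimizes $\widetilde{\mathcal L}$ we have $[\nabla\mathcal L(\bbeta^{k+1})]_{A^k}=\mathbf{0}$, while $\bd^{k+1}_{A^k}=\mathbf 0$ and $\bd^{k+1}_{I^k}=[-\nabla\mathcal L(\bbeta^{k+1})]_{I^k}$; hence $\bd^{k}=-\nabla\mathcal L(\bbeta^{k})$ holds on every coordinate for all $k$ (using $\bbeta^0=\mathbf 0$ for $k=0$). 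Writing $\bb^k:=\bbeta^k+\bd^k=\bbeta^k-\nabla\mathcal L(\bbeta^k)$ and $\bb^*:=\bbeta^*-\nabla\mathcal L(\bbeta^*)$, the mean value theorem gives $\bb^k-\bb^*=(\bI-\nabla^2\mathcal L(\widetilde\bbeta))\bh^k$ for some $\widetilde\bbeta$ on the segment joining $\bbeta^k$ and $\bbeta^*$, so the selection statistic deviates from its population version only through $\bh^k$ and the Hessian. Because $\mathrm{supp}(\bh^{k+1})\subseteq A^k\cup A^*$ has cardinality at most $K+T\le 2T$, condition (C1) is available for every pair of iterates I compare. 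Throughout I abbreviate the noise level by $\varepsilon:=\|\nabla\mathcal L(\bbeta^*)\|_\infty$.

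Next I would control the signal missed by the detection step, namely $\bbeta^*_{I^k}$, which is supported on $A^*\setminus A^k$. Since $A^k$ collects the $T$ largest entries of $|\bb^k|$ and $|A^*\setminus A^k|\le|A^k\setminus A^*|$, every missed coordinate lies strictly below the threshold while every falsely selected coordinate lies at or above it, yielding the greedy comparison $\sum_{j\in A^*\setminus A^k}|b^k_j|^2\le\sum_{i\in A^k\setminus A^*}|b^k_i|^2$. On the left $|b^k_j|\ge|\beta^*_j|-\varepsilon-|(\bb^k-\bb^*)_j|$ because $\beta^*_j\ne0$, and on the right $|b^k_i|\le\varepsilon+|(\bb^k-\bb^*)_i|$ because $\beta^*_i=0$. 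Substituting $\bb^k-\bb^*=(\bI-\nabla^2\mathcal L(\widetilde\bbeta))\bh^k$ and bounding the Hessian terms through the upper part of (C1), I would obtain an estimate of the form $\|\bbeta^*_{I^k}\|\lesssim c\,\|\bh^k\|+\sqrt{K+T}\,\varepsilon$, in which the multiplier $c$ measures how far $\bI-\nabla^2\mathcal L$ is from annihilating $\bh^k$ on the relevant $\le 2T$ coordinates; this is exactly where the factors $(1-TU)$ and the normalization $T(1+K)$ begin to appear.

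I would then bound the new error from the root-finding step. From $[\nabla\mathcal L(\bbeta^{k+1})]_{A^k}=\mathbf 0$ and the mean value theorem, $[\nabla^2\mathcal L(\widetilde\bbeta)\,\bh^{k+1}]_{A^k}=[-\nabla\mathcal L(\bbeta^*)]_{A^k}$; splitting $\bh^{k+1}=\bh^{k+1}_{A^k}+\bh^{k+1}_{I^k}$ with $\bh^{k+1}_{I^k}=-\bbeta^*_{I^k}$ gives the block equation $\nabla^2\mathcal L_{A^kA^k}\,\bh^{k+1}_{A^k}=[-\nabla\mathcal L(\bbeta^*)]_{A^k}-\nabla^2\mathcal L_{A^kI^k}\,\bbeta^*_{I^k}$. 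Inverting the diagonal block and using the $\ell_\infty$ lower bound (of order $1/L$) and the off-diagonal upper bound (of order $U\|\cdot\|_1$) extracted from (C1), I get $\|\bh^{k+1}_{A^k}\|_\infty\lesssim \tfrac1L(\varepsilon+U\|\bbeta^*_{I^k}\|_1)$; combined with $\|\bh^{k+1}_{I^k}\|_\infty=\|\bbeta^*_{I^k}\|_\infty$ and the detection bound above, this closes the loop into a single contraction toward the noise floor, $\|\bh^{k+1}\|_\infty\le\sqrt{\xi}\,\|\bh^k\|_\infty+(1-\sqrt{\xi})\tfrac{2}{L}\varepsilon$, with $\xi=1-\frac{2L(1-TU)}{T(1+K)}$; the hypothesis $U<1/T$ is precisely what forces $\xi\in(0,1)$. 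Iterating from $\bh^0=-\bbeta^*$ and converting the geometrically contracting $\ell_2$ energy to $\ell_\infty$ on the $\le K+T$ active coordinates produces the prefactor $\sqrt{(K+T)(1+U/L)}$ and the additive floor $\tfrac2L\varepsilon$, which is part (i). For part (ii) I would bound $\varepsilon=\|\nabla\mathcal L(\bbeta^*)\|_\infty=\|\tfrac1n\bX^{T}(\by-\bmu)\|_\infty$, a maximum over $p$ coordinates of averages of mean-zero sub-exponential terms; Bernstein-type concentration together with the sub-Gaussian row assumption and a union bound give $\varepsilon\lesssim c_1\sqrt{\log(p)/n}$ with probability at least $1-c_2\exp(-c_3\log(p))$, and substituting this into (i) yields the stated high-probability bound; requiring $(\sqrt\xi)^k\lesssim\sqrt{\log(p)/n}$ gives the iteration count $k\ge\mathcal O(\log_{1/\xi}(n/\log p))$.

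The hard part will be the deterministic algebra in part (i): condition (C1) only controls a quadratic form under the unusual normalization $\|\cdot\|_1\|\cdot\|_\infty$, so I must, via polarization and by testing against coordinate directions, convert it into the genuine $\ell_\infty$ operator estimates needed above, namely the lower bound $\|(\nabla^2\mathcal L_{A^kA^k})^{-1}\|_{\infty\to\infty}\lesssim 1/L$ on the diagonal block and the cross-term bound $\|\nabla^2\mathcal L_{A^kI^k}\bbeta^*_{I^k}\|_\infty\lesssim U\|\bbeta^*_{I^k}\|_1$. Carrying the constants from these two estimates through the detection and root-finding inequalities so that they combine into exactly $\xi=1-\frac{2L(1-TU)}{T(1+K)}$, rather than a looser factor, is the delicate bookkeeping on which the whole contraction rests.
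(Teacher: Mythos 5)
Your proposal takes a genuinely different route from the paper, and it contains a genuine gap at its load-bearing step. You propose a direct iterate-domain contraction: control the missed signal $\bbeta^*_{I^k}$ by a thresholding comparison, then invert the Hessian block via $\nabla^2\mathcal{L}_{A^kA^k}\,\bh^{k+1}_{A^k}=[-\nabla\mathcal{L}(\bbeta^*)]_{A^k}-\nabla^2\mathcal{L}_{A^kI^k}\,\bbeta^*_{I^k}$, using ``the $\ell_\infty$ lower bound (of order $1/L$)'' on the inverse block. But condition (C1) is only a quadratic-form condition under the $\|\cdot\|_1\|\cdot\|_\infty$ normalization; it yields restricted $\ell_2$ eigenvalue bounds (since $\|v\|_1\|v\|_\infty\geq\|v\|_2^2$) and, via polarization, entrywise bounds $\nabla^2\mathcal{L}_{ii}\in[L,U]$ and $|\nabla^2\mathcal{L}_{ij}|\leq U-L$. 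Neither of these gives $\|(\nabla^2\mathcal{L}_{A^kA^k})^{-1}\|_{\infty\to\infty}\lesssim 1/L$: passing from the $\ell_2$ bound costs a factor $\sqrt{T}$, and the entrywise route requires a diagonal-dominance condition of the form $L>(T-1)(U-L)$, which is not implied by $U<1/T$. You flag exactly this conversion as ``the hard part'' and defer the constant-tracking that would produce $\xi=1-\frac{2L(1-TU)}{T(1+K)}$ as ``delicate bookkeeping,'' but that deferred step is the entire content of the proof, and as set up it does not go through. There is also an internal inconsistency: if your claimed one-step contraction $\|\bh^{k+1}\|_\infty\leq\sqrt{\xi}\,\|\bh^k\|_\infty+(1-\sqrt{\xi})\tfrac{2}{L}\varepsilon$ held, iteration would give $(\sqrt{\xi})^k\|\bbeta^*\|_\infty+\tfrac{2}{L}\varepsilon$ with no prefactor, yet you then invoke a conversion of ``$\ell_2$ energy'' to $\ell_\infty$ to produce $\sqrt{(K+T)(1+U/L)}$ --- a sign that the norm in which the contraction actually lives has not been pinned down.

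The paper avoids Hessian block inversion entirely by working in the loss domain. Its Lemma A.2 shows $\|\nabla_{B^k}\mathcal{L}(\bbeta^{k})\|_1\|\nabla_{B^k}\mathcal{L}(\bbeta^{k})\|_{\infty}\geq 2L\zeta[\mathcal{L}(\bbeta^k)-\mathcal{L}(\bbeta^*)]$, Lemma A.3 combines this with the definition of the iterates and $U<1/T$ to get geometric decay $\mathcal{L}(\bbeta^{k+1})-\mathcal{L}(\bbeta^*)\leq\xi[\mathcal{L}(\bbeta^k)-\mathcal{L}(\bbeta^*)]$ (this is where $\xi$ arises, from the factors $(1/T-U)$, $2L$, and $\zeta\geq\frac{1}{1+K}$), and Lemma A.4 converts the loss gap into the $\ell_\infty$ bound by solving the univariate quadratic inequality $\frac{L}{2}\|\bbeta^k-\bbeta^*\|_{\infty}^2-\|\nabla\mathcal{L}(\bbeta^*)\|_{\infty}\|\bbeta^k-\bbeta^*\|_{\infty}-[\mathcal{L}(\bbeta^k)-\mathcal{L}(\bbeta^*)]\leq 0$ that follows from the (C1) lower bound; the prefactor $\sqrt{(K+T)(1+U/L)}$ comes from bounding the initial loss gap $\mathcal{L}(\textbf{0})-\mathcal{L}(\bbeta^*)$ with the (C1) upper bound. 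If you want to salvage your approach, you would need to either add an assumption giving genuine $\ell_\infty$ operator control of Hessian blocks (a mutual-incoherence-type condition), or redirect your argument through function values as the paper does. Your part (ii) is fine in spirit: the paper simply cites the sub-Gaussian concentration bound $\|\nabla\mathcal{L}(\bbeta^*)\|_{\infty}\leq c_1\sqrt{\log(p)/n}$ from the literature, which matches your Bernstein-plus-union-bound sketch.
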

\begin{proof}
See Appendix   B.
\end{proof}
%%%%%%%%%%%%%%%%%%%%%%%%%%
\begin{remark}
The requirement $U < \frac{1}{T}$  is not essential since we can always rescale  the loss function  $\mathcal{L}$   to make it hold. This rescaling is  equivalent to  multiplying  a  step size  to the dual variable in the   the GSDAR algorithm.
Let $\tau$ be this step size satisfying
$0<\tau <\frac{1}{TU}$.
Then,  \thmref{th1} still holds by replacing  $\xi$ with $1-\frac{2\tau L(1-\tau TU)}{T(1+K)}\in(0,1)$.

Before we submit this work, We aware that \cite{wang2019fast} proposed the sparse Newton method to solve   high dimensional logistic regression. The sparse Newton algorithm is similar to  GSDAR with step size.  However, \cite{wang2019fast} proved   a   fast local  convergence result of  $\bbeta^k$  to the minimizer $\bbeta^{\diamond}$  from the point view of optimization. Here, we bound   the  estimation error of  $\bbeta^k$ to the target $\bbeta^*$ from the angle of statistics.
\end{remark}
%%%%%%%%%%%%%%%%%%%%%%%%%%%%
\subsection{Support recovery}
\begin{theorem}\label{th3}
Assume (C1) and (C2) hold with  $0<U<\frac{1}{T}$, and the  rows of $\bX$ are i.i.d. sub-Gaussian with  $n\gtrsim \log(p)$.  Set $K\leq T$ in  \algref{algorithm1}.
Then   with probability at least $1-c_2\exp(-c_3\log(p))$,
$A^*\subseteq A^k$ if $k> \log_{\frac{1}{\xi}} 9 (T+K)(1+\frac{U}{L})r^2$, where $r = \frac{\|\bbeta^*\|_{\infty} }{\|\bbeta^*_{A^*}\|_{\min}}$ is the range of $\bbeta^*$.
\end{theorem}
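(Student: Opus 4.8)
The plan is to argue by contradiction, using the minimum-signal condition (C2) to show that any truly active coordinate must be selected once the geometrically decaying term in the $\ell_\infty$ bound of \thmref{th1} has shrunk below the noise floor. Throughout I would work on the high-probability event of \thmref{th1}(ii), which holds with probability at least $1-c_2\exp(-c_3\log(p))$; since every estimate below is deduced on this single event, the stated probability bound is inherited directly and no extra union bound is needed.

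Suppose, for contradiction, that $A^*\not\subseteq A^k$, i.e. there is an index $i\in A^*\cap I^k$. Because $i\in I^k=(A^k)^c$, line~4 of \algref{algorithm1} forces $\beta^{k+1}_i=0$, and since $i\in A^*$ we have $|\beta^*_i|\geq \|\bbeta^*_{A^*}\|_{\min}$. Hence
\begin{equation*}
\|\bbeta^*_{A^*}\|_{\min}\leq |\beta^*_i-\beta^{k+1}_i| \leq \|\bbeta^{k+1}-\bbeta^*\|_{\infty}.
\end{equation*}
On the other hand, applying \thmref{th1}(ii) at iteration $k+1$ gives
\begin{equation*}
\|\bbeta^{k+1}-\bbeta^*\|_{\infty}\leq \sqrt{(K+T)(1+\tfrac{U}{L})}\,(\sqrt{\xi})^{k+1}\|\bbeta^*\|_{\infty}+\frac{2c_1}{L}\sqrt{\frac{\log(p)}{n}}.
\end{equation*}

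The decisive step is to absorb the stochastic term into the signal using (C2). Since $\|\bbeta^*_{A^*}\|_{\min}\geq \frac{3c_1}{L}\sqrt{\frac{\log(p)}{n}}$, the noise term is at most $\frac{2}{3}\|\bbeta^*_{A^*}\|_{\min}$, so combining the two displays and rearranging yields
\begin{equation*}
\tfrac{1}{3}\|\bbeta^*_{A^*}\|_{\min}\leq \sqrt{(K+T)(1+\tfrac{U}{L})}\,(\sqrt{\xi})^{k+1}\|\bbeta^*\|_{\infty},
\end{equation*}
which, with $r=\|\bbeta^*\|_{\infty}/\|\bbeta^*_{A^*}\|_{\min}$, rearranges to $(\sqrt{\xi})^{k+1}\geq \big(3r\sqrt{(K+T)(1+\tfrac{U}{L})}\big)^{-1}$. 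Taking $\log_{\frac{1}{\xi}}$ of both sides (the inequality reverses because $\sqrt{\xi}\in(0,1)$, and the square root contributes a factor $2$) gives $k+1\leq \log_{\frac{1}{\xi}}\!\big(9(T+K)(1+\tfrac{U}{L})r^2\big)$, hence $k<\log_{\frac{1}{\xi}}\!\big(9(T+K)(1+\tfrac{U}{L})r^2\big)$, contradicting the hypothesis on $k$. Therefore no such $i$ exists and $A^*\subseteq A^k$.

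The main obstacle is not the algebra but correctly pairing the two ingredients and bookkeeping the iteration index. Condition (C2) must be calibrated so that the noise floor $\frac{2c_1}{L}\sqrt{\log(p)/n}$ is a fixed fraction of the minimum signal, leaving a strictly positive gap that only the decaying term has to overcome; the universal constant $c_1$ appearing in (C2) must be the same one controlling $\|\nabla\mathcal{L}(\bbeta^*)\|_\infty$ in \thmref{th1}(ii), otherwise the absorption fails. One must also note that membership of $i$ in $A^k$ is certified through the \emph{next} iterate via $\beta^{k+1}_i=0$, so \thmref{th1} is invoked at $k+1$; a minor point to confirm is that iteration $k+1$ occurs before termination (and if the algorithm has already stopped, that the active set is then constant so the same estimate applies), ensuring \thmref{th1}(ii) is valid there.
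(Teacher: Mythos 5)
Your proof is correct and follows essentially the same route as the paper's: combine the $\ell_\infty$ error bound \eqref{errorlinf} of \thmref{th1}(ii) with (C2) so that the error drops below $\|\bbeta^*_{A^*}\|_{\min}$ once $k$ exceeds the stated threshold, which forces every coordinate of $A^*$ into the active set. The only difference is organizational: the paper argues directly that $\|\bbeta^{k}-\bbeta^{*}\|_{\infty}<\|\bbeta^*_{A^*}\|_{\min}$ implies $A^*\subseteq A^k$, whereas your contradiction via $\beta^{k+1}_i=0$ for $i\in I^k$ makes explicit the index bookkeeping ($\mathrm{supp}(\bbeta^{k+1})\subseteq A^k$) that the paper's one-line conclusion glosses over.
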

\begin{proof}
See Appendix   C.
\end{proof}
%%%%%%%%%%%%%%%%%%%%%%%%%%
%%%%%%%%%%%%%%%%%%%%%%%%%%
\begin{remark}

 \thmref{th3}  demonstrates that the estimated support via GSDAR
%will contain
can cover the true support with the cost at most   $\mathcal{O}(\log (T))$ number of iteration if the minimum signal strength of $\bbeta^*$ is above the detectable threshold  $ \mathcal{O}(\sqrt{\frac{\log(p)}{n}})$. Support recovery for sparse  GLMs has also been  studied in
\cite{DCN,yuan2017gradient, shen2017iteration}.
In \cite{DCN}, the authors propose a DC proximal Newton (DCPN)
method to solve GLMs  with nonconvex sparse promoting penalties such as  MCP/SCAD.
 They derive an    estimation error in $\ell_2$ norm   with order $\mathcal{O}(\sqrt{\frac{K\log p}{n}})$   under similar assumptions as that of our (C1). And  they show that the true  support can be reconverted  under the requirement $\|\bbeta^*_{A^*}\|_{\min}\geq \mathcal{O}(\sqrt{\frac{K\log(p)}{n}}),$ which is stronger than  our assumption (C2). The computational complexity of DCPN is  worse than GSDAR  since  the DCPN is    based on  the multistage convex relaxation scheme
to transform the original nonconvex  optimizations into
sequences of LASSO regularized GLMs, therefore,  a Lasso inner solver is called
at each stage \cite{picaso}.
\cite{yuan2017gradient, shen2017iteration}. They proved that  Gradient Hard Thresholding
Pursuit  can recover the true  support under the requirement $\|\bbeta^*_{A^*}\|_{\min}\geq \mathcal{O}(\sqrt{\frac{K\log(p)}{n}}),$ which is stronger than  our assumption (C2).

Further, if we set $T = K$ in GSDAR, then the stopping condition $A^k = A^{k+1}$ will hold if $k \geq  \mathcal{O}(\log (K))$ since the estimated supports coincide with the true support.  %then.
As a consequence, the oracle estimator will be recovered in    $\mathcal{O}(\log (K))$ steps.  Neither in  \cite{yuan2017gradient} nor in \cite{shen2017iteration} proved that the stopping condition of Gradient Hard Thresholding
Pursuit  can be  satisfied.
 Meanwhile, the iteration complexity of Gradient hard thresholding pursuit  analyzed by \cite{shen2017iteration} is  $\mathcal{O}(K)$, which is worse   than the complexity bound established here.
\end{remark}

%%%%%%%%%%%%%%%%%%%%%%%%%%%%%%%%%%
\section{Adaptive GSDAR}\label{AGSDAR}
In practice, the sparsity level of the true parameter value $\bbeta^*$ is often unknown. As for that, we can regard $T$ as the tuning parameter.
Let $T$ increase from 0 to $Q$, which is a given large enough integer, then we can get a set of solutions paths: $\{\widehat{\bbeta}(T):T=0,1,...,Q\}$, where $\widehat{\bbeta}(0)=0$.
Generally, we can take $Q=\alpha n/\log(n)$ as suggested by \cite{fan2008sure}, where $\alpha$ is a positive and finite constant. We can use some methods such as  the cross-validation or HBIC
\cite{wang2013calibrating} to get $\widehat{T}$, the estimation of T. Thence we can take $\widehat{\bbeta}(\widehat{T})$ as the estimation of $\bbeta^{*}$.

In addition, we can  run  \algref{algorithm1} until the consecutive solutions is smaller than a prespecified tolerance level $\varepsilon$ by increasing $T$.
Also, we can
increase $T$ to run
\algref{algorithm1} until the residual square sum is less than a given tolerate level $\varepsilon$, then output $\bbeta^k$ at this time to terminate the calculation. If the purpose of the model is to classify, we can  stop the calculation  until classification accuracy rate achieve a certain level.
We summarize the  Adaptive GSDAR in  following  \algref{algorithmic2}.

\begin{algorithm}[H]
\caption{AGSDAR}
\label{algorithmic2}
\begin{algorithmic}[1]
\STATE Input:
$\bbeta^0$, $\bd^0$=$-\nabla \mathcal{L}(\bbeta^{0})$, an integer $\vartheta$, an integer Q, an early stopping criterion (optional). Set $k=1$.
\FOR{$k= 0,1,\ldots,$}
\STATE Run  \algref{algorithm1} with $T=\vartheta k$ and with initial value $\bbeta^{k-1}$, $\bd^{k-1}$. Denote the output by $\bbeta^k$, $\bd^k$.
\STATE \textbf{if} the early stopping criterion is satisfied or $T>Q$,
\textbf{then}
\STATE \quad stop
\STATE \textbf{else}
\STATE \quad $k=k+1$
\STATE \textbf{end if}
\ENDFOR
\STATE Output: $\widehat{\bbeta}(\widehat{T})$~as the estimates of $\bbeta^*$.
\end{algorithmic}
\end{algorithm}
%=====================
%%%%%%%%%%%%%%%%%%%%%%%%%%%%%%%%%%%%%%
%%%%%%%\begin{algorithm}[H]
%%%%%%%\renewcommand{\algorithmicrequire}{\textbf{Input:}}
%%%%%%%\renewcommand{\algorithmicensure}{\textbf{Output:}}
%%%%%%%\caption{AGSDAR}
%%%%%%%\label{algorithmic2}
%%%%%%%\begin{algorithmic}[1]
%%%%%%%\REQUIRE $\bbeta^0$, $\bd^0$=$-\nabla \mathcal{L}(\bbeta^{0})$, an integer $\vartheta$, an integer Q, an early stopping criterion (optional). Set~~$k=1$.
%%%%%%%\STATE\textbf{for} $k=1,...,$ \textbf{do}
%%%%%%%\STATE \quad Run  \algref{algorithm1} with $T=\vartheta k$ and with initial value\\ \quad $\bbeta^{k-1}$, $\bd^{k-1}$. Denote the
%%%%%%%        output by $\bbeta^k$, $\bd^k$.
%%%%%%%\STATE \quad \textbf{if} the early stopping criterion is satisfied or $T>Q$ then
%%%%%%%\STATE \quad \quad  stop
%%%%%%%\STATE \quad \textbf{else}
%%%%%%%\STATE \quad  \quad  $k=k+1$
%%%%%%%\STATE \quad  \textbf{end if}
%%%%%%%\STATE \textbf{end for}
%%%%%%%\ENSURE $\widehat{\bbeta}(\widehat{T})$~as the estimates of $\bbeta^*$.
%%%%%%%\end{algorithmic}
%%%%%%%\end{algorithm}
%%======================================
%=================

%%%%%%%%%%%%%%%%%%%%%%%%%%%%%%%%%%%%%
\section{Simulation Studies and real data analysis}\label{simulation}
In this section, we make some simulations  and real data analysis in logistic regression model
to illustrate our proposed methods GSDAR and AGSDAR. First, we compare the  simulations results of GSDAR/AGSDAR with Lasso and MCP in terms of  accuracy, efficiency and classification accuracy rate. Then,  we further compare AGSDAR with Lasso and MCP  on the effects of  model parameters such as sample size $n$,  variable dimension $p$
and   correlation $\rho$ in $\bX$.
Third, we get the average iterative steps of GSDAR.
%to verify its computational efficiency.
Last, GSDAR and AGSDAR are  compared  with Lasso and MCP  on  some real data sets.

Our implement of Lasso and MCP is according to the R package ncvreg developed by \cite{breheny2011coordinate}. In implement of AGSDAR, we set $Q=n/\log(n)$, and do not use the early stopping criterion instead use HBIC criteria  to chose the $T$.
%%%%%%%%%%%%%%%%%%
\subsection{Accuracy, efficiency and classification accuracy rate}
We generate  the design matrix $\bX$ as follows. First, we generate a $n\times p$ random Gaussian matrix $\bar{\bX}$
whose entries are i.i.d. $\sim N(0,1)$,
and normalize its columns to the $\sqrt{n}$ length. Then the design matrix $\bX$ is generated with
$\bx_1=\bar{\bx}_1$, $\bx_p=\bar{\bx}_p$, and $\bx_j=\bar{\bx}_j+\rho(\bar{\bx}_{j+1}+\bar{\bx}_{j-1})$, $j=2,...,p-1$. The underlying regression coefficient $\bbeta^*$ with
$K$ nonzero coefficients is generated
such that the $K$ nonzero coefficients in $\bbeta^*$ are uniformly
distributed in $(m_1,m_2)$, where $m_1=5\sqrt{2\log{p}/n}$
and $m_2=100\cdot m_1$.
Besides, the $K$ nonzero coefficients are  randomly assigned to the $K$ components of $\bbeta^*$.
The responses $y_{i}\sim B(1,p_{i})$,
where $p_{i}$=$\frac{\exp(\bx_{i}^T\bbeta^*)}{1+\exp(\bx_{i}^T\bbeta^*)}$, $i=1,...,n$.

Since Logistic regression model aims to classify, we randomly choose $80\%$ of the samples as the training set and the rest for the test set to get the  classification accuracy rate by predicting. Set $n=300$, $p=5000$, $K=10$ and $\rho=0.2:0.2:0.8$.
%%%%%%%%%%%%%%%%%%%%%%table1
\begin{table}[!ht]
\centering
\caption{Numerical results (the averaged relative error, CPU time, the average  classification accuracy rate by predicting) on data set with $n=300$, $p=5000$, $K=10$, $\rho=0.2:0.2:0.8$.}
\label{table1}
\scalebox{1.4}{
\begin{tabular}{cccccccccc}
\hline
 $\rho$  & method & ReErr & Time(s) & ACRP \\
\hline
     &Lasso& 0.99&6.03&86.68\%\\
 0.2&MCP&0.95&11.93&93.95\%\\
 &GSDAR&0.69&0.60&92.62\%\\
  & AGSDAR & 0.95&1.42&91.15\%\\
\hline
   &Lasso&0.99& 6.11 & 86.62\%\\
 0.4&MCP&0.95&11.07&94.37\%\\
&GSDAR&0.69&0.64&92.47\%\\
& AGSDAR&0.97&1.33&88.73\%\\
\hline
    &Lasso&0.99&6.33&86.55\% \\
  0.6 &MCP &0.96&11.47&93.85\%\\
  &GSDAR&0.70&0.55&94.40\%\\
& AGSDAR &0.98& 1.41&89.80\%\\
\hline
    &Lasso&1.00&6.28&86.43\% \\
  0.8 &MCP& 0.97&11.47&93.38\%\\
  &GSDAR&0.79&0.60&96.11\%\\
& AGSDAR &0.98& 1.44&89.75\%\\
\hline
\end{tabular}
}
\end{table}
%%%%%%%%%%%%%%%%%%%%%%%%%%%%%%%%%%

\tabref{table1} displays simulation results including the average of relative error of estimate  $\hat{\bbeta}$ defined as ReErr=$\frac{1}{100}\sum\|\hat{\bbeta}-\bbeta^*\|/\|\bbeta^*\|$, CPU time and classification accuracy rate of prediction defined as ACRP based on 100 independent replications.

We can conclude that GSDAR has the lowest values in ReErr regardless of the values of $\rho$, while Lasso, MCP and AGSDAR have almost  same values  in ReErr.
In terms of the speed, GSDAR is the fastest among all the considered methods
with 10 times fast to Lasso and 20 times fast to MCP for every $\rho$.
AGSDAR is also significantly faster than Lasso and MCP,
and its speed is nearly $5$ and $8$ times that of Lasso and MCP, respectively. As for the average classification accuracy rate, GSDAR has higher classification accuracy rate than other methods when  $\rho > 0.4$, however,  MCP is slightly better than GSDAR  when
$\rho\leq 0.4$.
In summary, GSDAR and AGSDAR perform well in terms of computational speed,  GSDAR can effectively get the oracle estimator and has excellent results in predicting.
%%%%%%%%%%%%%%%%%%%%%%%%%%%%%%
\subsection{Influence of the model parameters}
We now consider the effects of each of the model parameters on the performance of AGSDAR, Lasso  and MCP.
We generate the design matrix $X$ by the way  that each row of $X$ comes from $N(0,\Sigma)$, where $\Sigma_{ij}=\rho^{|i-j|}$, $1\leq i,j\leq p$.
 Let $R=m_2/m_1$, where $m_2=
\|\bbeta^*_{A^*}\|_{\max}$ and $m_1=\|\bbeta^*_{A^*}\|_{\min}=1$. The underlying regression coefficient vector $\bbeta^*\in \mathbb{R}^p$ is
generated in such a way that
the $K$ nonzero coefficients in $\bbeta^*$ are uniformly
distributed in $(m_1,m_2)$,
and $A^*$ is a randomly chosen subset of $\{1,...,p\}$ with $|A^*|=K<n$. %and $R\in[1,10^3 %]$.
Then the observation variable $y_{i}\sim B(1,p_{i})$,
where $p_{i}$=$\frac{\exp(\bx_{i}^T\bbeta^*)}{1+\exp(\bx_{i}^T\bbeta^*)}$, $i=1,...,n$.

We compare the performance of all the considered methods in terms of
average positive discovery rate (APDR), average false discovery rate (AFDR) and average combined discovery rate (ADR) defined by \cite{luo2014sequential}
to characterize the selection accuracy of different parameters to the model and
showed as follows.
\begin{eqnarray*}
\mbox{APDR}&=&\frac{1}{100}\sum\frac{|\hat{A}\bigcap A^*|}{|A^*|},\\
\mbox{AFDR}&=&\frac{1}{100}\sum\frac{|\hat{A}\bigcap {A^*}^{c}|}{|\hat{A}|},\\
\mbox{ADR}&=&\mbox{APDR}+(1-\mbox{AFDR}),
\end{eqnarray*}
where $\hat{A}$ denotes the estimated support set.
The following simulations  are  based on 100 independent replications.
%%%%%%%%%%%%%%%%%%%%%%%%%%%%%%%%%%%%
\subsubsection{Influence of the sample size $n$}
\tabref{table2}  shows the influence of the sample size $n$
on APDR, AFDR and ADR.
We set  $p=500$, $K=6$, $R=10$, $\rho=0.3$ and let $n$ varies from 100 to 400 by step 50 to generate the data.
%%%%%%%%%%%%%%%%%table2
\begin{table}[!ht]
\centering
\caption{Numerical results (APDR, AFDR, ADR)
on the data $p=500$, $K=6$, $R=10$, $\rho=0.3$
and $n=100:50:400$.}
\label{table2}
\scalebox{1.4}{
\begin{tabular}{ccccccccc}
\hline
 $n$  & method & APDR & AFDR & ADR \\
\hline
   &Lasso&0.83&0.84&0.99\\
100  &MCP&0.79& 0.36&1.43\\
&AGSDAR&0.72&0.19&1.53\\
\hline
    &Lasso&0.92&0.87&1.05\\
   150 &MCP&0.90&0.22&1.68\\
&AGSDAR&0.85&0.15&1.70\\
\hline
    &Lasso&0.95&0.88&1.07\\
200 &MCP & 0.93&0.19&1.74\\
&AGSDAR&0.90&0.12&1.78\\
\hline
    &Lasso&0.97&0.89&1.08\\
250 &MCP&0.93&0.16&1.77\\
&AGSDAR&0.93&0.06&1.87\\
\hline
    &Lasso&0.98&0.89&1.09\\
300 &MCP&0.95&0.15&1.80\\
&AGSDAR&0.96&0.06&1.90\\
\hline
    &Lasso&0.99&0.89&1.10\\
  350&MCP&0.95&0.16&1.79\\
&AGSDAR&0.96&0.05&1.91\\
\hline
   &Lasso&0.99&0.89&1.10\\
  400 &MCP&0.97&0.15&1.82\\
&AGSDAR&0.98&0.05&1.93\\
\hline
\end{tabular}
}
\end{table}
%%%%%%%%%%%%%%%%%%%%%%%%%%%%%%%%%%%%%%

It can be seen  that as the sample size $n$ increases,
Lasso always has  the highest values on  APDR among the three methods. However, Lasso also has the
worst values on AFDR for each $n$, which is only a little smaller than APDR.
It indicates that Lasso tends to choose more variables,
even there are many unsuitable variables being selected. Therefore, Lasso is more greedy in selecting variables than MCP and AGSDAR.
AGSDAR always has  the best values on AFDR and ADR for every $n$, and its values on APDR are also not small, which means that AGSDAR can effectively prevent the erroneous variable from being selected while selecting  as many proper variables as possible into the model, especially when the sample size $n$  is getting larger.
MCP is similar to AGSDAR,
it can not only select a certain amount of proper variables, but also prevent some improper variables from being selected into the model,
while it still chooses more improper variables into the model than AGSDAR.
Hence,  AGSDAR can always select more proper variables effectively and minimize the number of improper variables selected into the model with the increasing sample size $n$.
%%%%%%%%%%%%%%%%%%%%%%%%%%%%%
\subsubsection{ Influence of the variable dimension $p$}
\tabref{table3}  shows the influence of the variable dimension $p$
on the  APDR, AFDR and ADR. We fix
$n=100$, $K=6$, $R=10$, $\rho=0.2$, and set  $p=100:100:700$  to generate the data.
%%%%%%%%%%%%%%%%%%%table3
\begin{table}[!ht]
\centering
\caption{Numerical results (APDR, AFDR, ADR)
on the data $n=100$, $K=6$, $R=10$, $\rho=0.2$ and  $p=100:100:700$.}
\label{table3}
\scalebox{1.4}{
\begin{tabular}{ccccccccc}
\hline
$p$&method&APDR&AFDR&ADR\\
\hline     &Lasso&0.92&0.77&1.15\\
100 &MCP &0.83& 0.20& 1.63\\
&AGSDAR&0.82&0.16&1.66\\
\hline
&Lasso&0.88&0.81&1.07\\
 200 &MCP &0.83& 0.23& 1.60\\
&AGSDAR&0.80&0.17&1.63\\
\hline
&Lasso&0.89&0.82&1.07\\
300 &MCP &0.82& 0.29& 1.53\\
&AGSDAR&0.80&0.21&1.59\\
\hline
&Lasso&0.84&0.84&1.00\\
400&MCP & 0.79&0.34&1.45\\
&AGSDAR&0.75&0.20&1.55\\
\hline
&Lasso&0.83&0.85&0.98\\
500&MCP &0.78& 0.35& 1.43\\
&AGSDAR&0.74&0.20&1.54\\
\hline
&Lasso&0.79&0.85&0.94\\
600&MCP&0.77&0.39& 1.38\\
&AGSDAR&0.70&0.22&1.48\\
\hline  &Lasso&0.80&0.85&0.95\\
700&MCP&0.77&0.37&1.40\\
&AGSDAR&0.70&0.25&1.45\\
\hline
\end{tabular}
}
\end{table}
%%%%%%%%%%%%%%%%%%%%%%%

As  \tabref{table3} depicted,
Lasso has the largest values on  APDR and AFDR, and lowest values on  ADR for every variable dimension $p$.
Meanwhile, the values of Lasso on AFDR are  higher than that of APDR when $p>400$ and beyond 0.5 for each $p$, which suggests that Lasso selects much more improper variables than proper variables into model,
thus it increases the complexity of model.
AGSDAR and MCP take almost  same values on APDR, especially when $p<600$,
indicating that  MCP and AGSDAR have the same ability to select proper variables when $p$ takes the appropriate values. Besides, AGSDAR  gets the best values on AFDR and ADR for every $p$. Hence, to the utmost extent,  AGSDAR can  prevent the improper variables being selected into the model, thus reduce the complexity of the model.
\subsubsection{ Influence of the correlation $\rho$}
\tabref{table4}  shows the influence of the correlation $\rho$ on APDR, AFDR and ADR. We set $n=150$, $p=500$, $K=6$, $R=10$
and $\rho=0.1:0.1:0.9$ to generate the data.
%%%%%%%%%%%%%%%%%%%%%%%%%table4
\begin{table}[!ht]
\centering
\caption{Numerical results (APDR, AFDR, ADR)
on the data $n=150$, $p=500$, $K=6$, $R=10$ and $\rho=0.1:0.1:0.9$.}
\label{table4}
\scalebox{1.4}{
\begin{tabular}{ccccccccc}
\hline
 $\rho$  & method & APDR & AFDR & ADR \\
\hline

    &Lasso&0.92&0.87&1.05\\
  0.1  &MCP &0.87& 0.22&1.65\\
&AGSDAR&0.85&0.15&1.70\\
	\hline
   &Lasso&0.92& 0.87&1.05\\
 0.2&MCP & 0.89&0.21&1.68\\
&AGSDAR&0.85&0.15&1.70\\
\hline
    &Lasso&0.92&0.87&1.05\\
 0.3 &MCP & 0.90& 0.23&1.67\\
&AGSDAR&0.88&0.13&1.75\\
\hline
    &Lasso&0.91&0.87&1.04\\
  0.4&MCP&0.87&0.23&1.64\\
&AGSDAR&0.84&0.15&1.69\\		
\hline
   &Lasso&0.90 &0.86&1.04\\
  0.5 &MCP & 0.85&0.26&1.59\\
&AGSDAR&0.83&0.16&1.67\\
\hline
    &Lasso&0.90&0.87&1.03\\
  0.6&MCP&0.88&0.22&1.66\\
&AGSDAR&0.84&0.16&1.68\\
\hline
&Lasso&0.90&0.86&1.04\\
  0.7&MCP&0.83&0.26&1.57\\
&AGSDAR&0.80&0.22&1.58\\
\hline
&Lasso&0.88&0.86&1.02\\
  0.8&MCP&0.75&0.31&1.44\\
&AGSDAR&0.75&0.26&1.49\\
\hline
&Lasso&0.82&0.84&0.98\\
0.9&MCP&0.55&0.48&1.07\\
&AGSDAR&0.58&0.44&1.14\\
\hline
\end{tabular}
}
\end{table}
%%%%%%%%%%%%%%%%%%%%%%%%%%%%%%%%%%%%%%%%%%%%

In  \tabref{table4},
Lasso performs similarly
as the first two simulations about the sample size $n$ and the variable dimension $p$
affecting the model.
Lasso also has the  best values on  APDR and  worst values on AFDR and ADR for every $\rho$.
On the one hand,
AGSDAR and MCP have nearly same values on APDR for each $\rho$.
On the other hand,
 with increasing correlation $\rho$, AGSDAR  always obtains  the best values on AFDR and ADR.
  Therefore we can conclude that AGSDAR can simultaneously select a certain number of proper variables and  prevent the improper variables into the model all the time with increasing correlation $\rho$.
%%%%%%%%%%%%%%%%%%%%%%%%%%%%%%%%%%%%%%%%%%%%%%%%%%
\subsection{Number of iterations}
In order to further illustrate the effectiveness of GSDAR, we conduct simulations to get the average number of iterations of GSDAR with $K$=$T$ in  \algref{algorithm1}.
 We generate the data as the same way described in  subsection 5.2.
Meanwhile,  we take the influence of  correlation $\rho$ into account, then
we obtain the average number of iterations for different values of  correlation $\rho$.
\figref{F1} shows the average number of iterations of GSDAR
based 100 independent replications on data set:
$n=500$, $p=1000$, $K= 2 : 2 : 50$, $R=3$ and $\rho=0.1:0.2:0.7$.
%%%%%%%%%%%%%%%%%%%%%%%%%%%%Figure1
\begin{figure}[!ht]
  \centering
  \includegraphics[width=\linewidth]{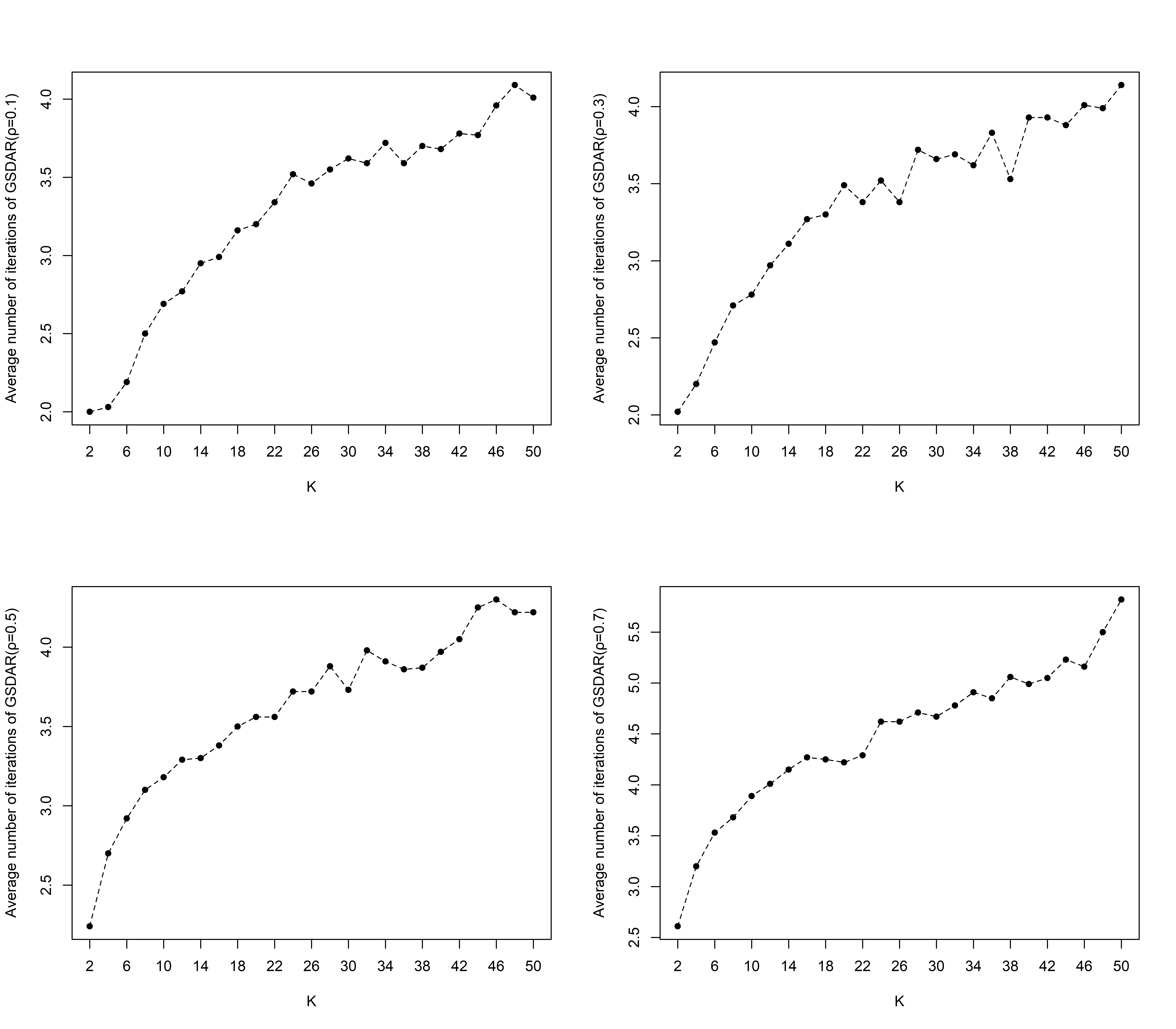}\\
  \caption{The average number of iterations of GSDAR as K increases}\label{F1}
\end{figure}
%%%%%%%%%%%%%%%%%%%%%%%%%%%%%%%%%%%%%%%5

As shown in  \figref{F1}, the average number of iterations of the GSDAR algorithm increases as the sparsity level increases from 2 to 50 for every $\rho$.
Even the sparsity level $K$ is 50,  the average number of iterations is only 4  when the correlation $\rho$ is 0.1, 0.3, and 0.5, and is nearly 5.5  when the correlation $\rho$ is 0.7. It illustrates  that our approach converges fast.

\begin{table}[!ht]
\centering
\caption{Description of four real data sets}
\label{tabler1}
\scalebox{0.85}{
\begin{tabular}{ccccccccc}
\hline
 Data name  & $n$ samples& $p$ features & training size $n_1$ & testing set $n_2$\\
\hline
colon-cancer&62&2000&62&0\\
duke breast-cancer&42&7129&38&4\\
gisette&7000&5000&6000&1000\\
leukemia&72&7129&38&34\\
\hline
\end{tabular}
}
\end{table}
%%%%%%%%%%%%%%%%%%%%%%tabler2
\begin{table}[!ht]
\centering
\caption{Classification accuracy rate}
\label{tabler2}
\scalebox{1.03}{
\begin{tabular}{ccccccccc}
\hline
 Data name&GSDAR&AGSDAR&Lasso&MCP\\
\hline
colon-cancer&98.39\%&96.77\%&90.32\%&85.48\%\\
duke breast-cancer&1&1&1&25\%\\
gisette&54.10\%&56.30\%&51.30\%&59.90\%\\
leukemia&91.18\%&94.12\%&91.17\%&94.11\%\\
\hline
\end{tabular}
}
\end{table}

%%%%%%%%%%%%%%%%%%%%%%%%%%%%%%%%
\subsection{Real data example}\label{real}
Analysing  biological data using sparse learning methods is a hot topic \cite{gu2017solution,
mahmud2018applications,cai2018modified,li2019generalized,ssn}.
We  demonstrate the performance of the   proposed methods GSDAR and AGSDAR with  four  real data:
colon-cancer, duke breast-cancer, gisette and leukemia, which are exhaustively described in  \tabref{tabler1}
 and can be  downloaded from
\url{https://www.csie.ntu.edu.tw/~cjlin/libsvmtools/datasets/}.
Besides, colon-cancer, duke breast-cancer and leukemia have been normalized such that the mean is $0$ and variance is 1, and the values $-1$s of response variable $y$ are replaced by $0$.
Logistic regression model seeks to classify, then we get the  classification accuracy rate,
 and compare the classification accuracy rate of the proposed methods with Lasso and MCP based on these  real data sets.
Let $T=0.5n/\log(n)$ in GSDAR,
and implement the AGSDAR, Lasso and MCP
by the same way as depicted in \secref{simulation}.
%Since
When the data set has no testing data, we get the classification accuracy rate through the training set itself.  The results are showed in \tabref{tabler2},
which indicates that the classification accuracy rates of GSDAR and AGSDAR are comparable to that of Lasso and MCP.
As a result, the prosed methods are effective in
colon-cancer, duke breast-cancer, gisette and leukemia data sets.
%%%%%%%%%%%%%%%%%%%%%%tabler1

%%%%%%%%%%%%%%%%%%%%%%%%%%%%%%%%
\section{Conclusion}\label{conclusion}
We extend the
support detection and root finding (SDAR) algorithm  to estimation in high-dimensional GLMs,
 then we get the GSDAR algorithm.
GSDAR algorithm is also a constructive approach for fitting sparse, high-dimensional GLMs. In theory, we get
$\ell_{\infty}$  optimal error bound for the sequence generated by GSDAR algorithm under some regular conditions.
Further,  we can get the oracle estimator, if the target signal is detectable  with a high probability.
 We propose  the AGSDAR algorithm, one adaptive version of GSDAR, to handle the problem of unknown sparsity level. Numerical results  compared with  Lasso and MCP on simulations and
 real data show  GSDAR algorithm and AGSDAR algorithm are fast and  stable and accurate.

For further research,
we will extend  GSDAR to solve  structured sparsity learning problems \cite{breheny2015group,jiao2017group} with general convex losses
or to problems related to deep neural networks (DNNs)
\cite{scardapane2017group,louizos2018learning,ma2019transformed}.

\section*{Acknowledgements}

The authors are grateful to the anonymous referees, the associate editor and the editor for their helpful comments, which have led to a significant improvement on the quality of the paper.
The work of Jian Huang is supported in part by the NSF grant DMS-1916199.
The work of Y. Jiao was supported in part by
the National Science Foundation of China
under Grant 11871474  and by the research fund
of KLATASDSMOE.
The work of J. Liu is supported by Duke-NUS Graduate Medical School WBS: R913-200-098-263 and MOE2016- T2-2-029 from Ministry of Eduction, Singapore.
The work of Yanyan Liu is   supported in part by
the National Science Foundation of China
under Grant 11971362.
 The work of X. Lu is supported by the National Key Research and Development Program of China (No. 2018YFC1314600), the National Science Foundation of China (No. 91630313 and No. 11871385), and the Natural Science Foundation of Hubei Province (No. 2019CFA007).

\bibliographystyle{abbrv}
\bibliography{STS758pdas}

\newpage
%\section*{Supplementary Materials}
%\textbf{Appendix:} The supplementary materials contain the proofs of the theoretical results. % to understand  the conditions used in the theoretical results . \\
%\textbf{MATLAB and R Code:} MTALAB and R code for the algorithm and the simulation studies.

\appendix
\section{Appendix} In the appendix, we will show  the proofs of the theoretical results.
\subsection{Proof of \lemref{L3.1}}
\begin{proof}
Let $L_\lambda(\bbeta)=\mathcal{L}(\bbeta)+\lambda\|\bbeta\|_0$.
 Assume $\bbeta^\diamond$ is a global minimizer of $L_\lambda(\bbeta)$ and $\bd^{\diamond}=-\nabla \mathcal{L}(\bbeta^{\diamond})$. %$A^\diamond=\text{supp}(\bbeta^\diamond)$, $I^\diamond=(A^\diamond)^c$.
Then by  Theorem 10.1  in \cite{rockafellar2009variational}, we have
 \begin{equation}\label{femat}
 \textbf{0} \in \nabla \mathcal{L} (\bbeta^{\diamond}) + \lambda \partial \|\bbeta^{\diamond}\|_{0},
 \end{equation}
 where $\partial \|\bbeta^{\diamond} \|_{0}$ denotes the limiting subdifferential  (see Definition 8.3 in \cite{rockafellar2009variational}) of  $ \|\cdot\|_{0}$ at $\bbeta^{\diamond}$.
 Let $\bd^{\diamond} = -\nabla \mathcal{L}(\bbeta^{\diamond})$ and
 define $G(\bbeta) = \frac{1}{2}\|\bbeta - (\bbeta^{\diamond}+\bd^{\diamond})\|^2 + \lambda \|\bbeta\|_0$.
 Since \eqref{femat} is equivalent to
\begin{equation*}\textbf{0} \in \bbeta^{\diamond} -(\bbeta^{\diamond}+\bd^{\diamond}) + \lambda \partial \|\bbeta^{\diamond}\|_{0},
\end{equation*}
we deduce that  $\bbeta^{\diamond}$ is a KKT point  of $ G(\bbeta)$.  Then
$\bbeta^{\diamond}=H_{\lambda}(\bbeta^{\diamond}+\bd^{\diamond})$ follows from the result that the  KKT poits of $G$ is coincide with its coordinate-wise minimizer \cite{huang2019}.
Conversely,  suppose
$\bbeta^{\diamond}$ and $\bd^{\diamond}$ satisfy \eqref{eq4}, then $\bbeta^{\diamond}$ is a local minimizer of $L_{\lambda}(\bbeta)$. To show $\bbeta^{\diamond}$ is a local minimizer of $L_{\lambda}(\bbeta)$, we can assume $\bh$ is small enough and $\|\bh\|_{\infty}<\sqrt{2\lambda}$. Then we will show $L_{\lambda}(\bbeta^{\diamond}+\bh)\geq L_{\lambda}(\bbeta^{\diamond})$ in two case respectively.\\
\textbf{Case1:} $\bh_{I^{\diamond}}\neq0$.
\begin{equation*}
\|\bbeta^{\diamond}+\bh\|_{0}=\|\bbeta_{A^\diamond}^{\diamond}+\bh_{A^\diamond}\|_{0}+\|\bh_{I^\diamond}\|_{0},
\end{equation*}
\begin{equation*}
\lambda\|\bbeta^{\diamond}+\bh\|_{0}-\lambda\|\bbeta^{\diamond}\|_{0}=\lambda\|\bbeta_{A^\diamond}^{\diamond}+\bh_{A^\diamond}\|_{0}+\lambda\|\bh_{I^\diamond}\|_{0}-\lambda\|\bbeta_{A^\diamond}^{\diamond}\|_{0}.
\end{equation*}
Because $|\beta^{\diamond}_{i}|\geq\sqrt{2\lambda}$  for  $i\in{A^\diamond}$  and  $\|\bh\|_{\infty}<\sqrt{2\lambda}$, we have \begin{equation*}
\lambda\|\bbeta_{A^\diamond}^{\diamond}+\bh_{A^\diamond}\|_{0}-\lambda\|\bbeta_{A^\diamond}^{\diamond}\|_{0}=0,
\end{equation*}
\begin{equation*}
\lambda\|\bbeta^{\diamond}+\bh\|_{0}-\lambda\|\bbeta^{\diamond}\|_{0}=\lambda\|\bh_{I^\diamond}\|_{0}>\lambda.
\end{equation*}
Therefore, we get
\begin{align*}
&L_{\lambda}(\bbeta^{\diamond}+\bh)-L_{\lambda}(\bbeta^{\diamond})\\
&=\sum_{i=1}^{n}[c(\bx_{i}^{T}(\bbeta^{\diamond}+\bh))-c(\bx_{i}^{T}\bbeta^{\diamond})]-\by^T\bX\bh+\lambda\|\bh_{I^\diamond}\|_{0}\\
&>\sum_{i=1}^{n}[c(\bx_{i}^{T}(\bbeta^{\diamond}+\bh))-c(\bx_{i}^{T}\bbeta^{\diamond})]-\by^T\bX\bh+\lambda\\
&>0.
\end{align*}
Let $m(\bh)=\sum_{i=1}^{n}[c(\bx_{i}^{T}(\bbeta^{\diamond}+\bh))-c(\bx_{i}^{T}\bbeta^{\diamond})]-\by^{T}\bX\bh$, so $m(\bh)$ is a continuous function about $\bh$. As $\bh$ is small enough and $\|\bh\|_{\infty}<\sqrt{2\lambda}$, then $m(\bh)+\lambda>0$. Thus the last inequality holds.\\
\textbf{Case2:} $\bh_{I^{\diamond}}=0$. \begin{equation*}
\lambda\|\bbeta^{\diamond}+\bh\|_{0}-\lambda\|\bbeta^{\diamond}\|_{0}=\lambda\|\bbeta_{A^\diamond}^{\diamond}+\bh_{A^\diamond}\|_{0}-\lambda\|\bbeta_{A^\diamond}^{\diamond}\|_{0}.
\end{equation*}
As $|\beta^{\diamond}_{i}|\geq\sqrt{2\lambda}$  for  $i\in{A^\diamond}$ and $\|\bh_{A^\diamond}\|_{\infty}<\sqrt{2\lambda}$, then we have $$\lambda\|\bbeta^{\diamond}+\bh\|_{0}-\lambda\|\bbeta^{\diamond}\|_{0}=\lambda\|\bbeta_{A^\diamond}^{\diamond}+\bh_{A^\diamond}\|_{0}-\lambda\|\bbeta_{A^\diamond}^{\diamond}\|_{0}=0,$$
and
\begin{align*}
&L_{\lambda}(\bbeta^{\diamond}+\bh)-L_{\lambda}(\bbeta^{\diamond})\\
&=\sum_{i=1}^{n}[c(\bx_{i}^{T}(\bbeta^{\diamond}+\bh))-c(\bx_{i}^{T}\bbeta^{\diamond})]-\by^T\bX\bh\\
&=\sum_{i=1}^{n}[c(\bx_{i(A^\diamond)}^{T}(\bbeta_{A^\diamond}^{\diamond}+\bh_{A^\diamond}))-c(\bx_{i(A^\diamond)}^{T}\bbeta_{A^\diamond}^{\diamond})]-\by^T\bX_{A^{\diamond}}\bh_{A^{\diamond}}\\
&=\sum_{i=1}^{n}[c(\bx_{i(A^\diamond)}^{T}(\bbeta_{A^\diamond}^{\diamond}+\bh_{A^\diamond}))]-\by^T\bX_{A^{\diamond}}(\bbeta^{\diamond}_{A^{\diamond}}+ \bh_{A^{\diamond}})\\
&\quad-\sum_{i=1}^{n}[c(\bx_{i(A^\diamond)}^{T}\bbeta_{A^\diamond}^{\diamond})]+\by^T\bX_{A^{\diamond}}\bbeta^{\diamond}_{A^{\diamond}}\\
&\geq0.
\end{align*}
As known that $\bbeta_{A^\diamond}^{\diamond}\in\underset{\bbeta_{A^\diamond}}{\mbox{argmin}}~\widetilde{\mathcal{L}}(\bbeta_{A^\diamond})$, so the last inequality holds.
In summary, $\bbeta^{\diamond}$ is a local minimizer of $L_{\lambda}(\bbeta^{\diamond})$.
\end{proof}
%%%%%%%%%%%%%%%%%%%%%%%%%%%%%%%%%%%%%%%%%%%%%%%
\begin{lemma}\label{L9.1}
Assume  (C1) holds and $\|\bbeta^{*}\|_{0}=K\leq T$.  Denote $B^k = A^{k}\backslash A^{k-1}$. Then,
\begin{equation*}
\|\nabla_{B^k}\mathcal{L}(\bbeta^{k})\|_1\|\nabla_{B^k}\mathcal{L}(\bbeta^{k})\|_{\infty}\geq 2L\zeta[\mathcal{L}(\bbeta^k)-\mathcal{L}(\bbeta^*)],
\end{equation*}
where $\zeta=\frac{|B^k|}{|B^k|+|A^*\backslash A^{k-1}|}$.
\end{lemma}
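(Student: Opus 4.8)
The plan is to convert the objective-value gap $\mathcal{L}(\bbeta^k)-\mathcal{L}(\bbeta^*)$ into a bound on the gradient restricted to the \emph{deficit set} $D:=A^*\backslash A^{k-1}$ via the restricted strong convexity in (C1), and then transfer that gradient bound onto the newly selected set $B^k$ by invoking the thresholding rule defining $A^k$. First I would record the structure of $\bbeta^k$: since $\bbeta^k_{A^{k-1}}$ exactly minimizes $\widetilde{\mathcal{L}}(\bbeta_{A^{k-1}})$ and $\bbeta^k_{I^{k-1}}=\mathbf{0}$, the iterate $\bbeta^k$ is supported on $A^{k-1}$ and obeys the first-order condition $\nabla_{A^{k-1}}\mathcal{L}(\bbeta^k)=\mathbf{0}$. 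Writing $J:=A^{k-1}\cup A^*$, we have $|J|\le T+K\le 2T$, the difference $\bbeta^*-\bbeta^k$ is supported on $J$, and on $J$ the gradient $\nabla\mathcal{L}(\bbeta^k)$ lives only on $D$, so $\langle\nabla\mathcal{L}(\bbeta^k),\bbeta^*-\bbeta^k\rangle=\langle\nabla_D\mathcal{L}(\bbeta^k),(\bbeta^*-\bbeta^k)_D\rangle$.

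Next I would Taylor-expand $\mathcal{L}$ along the segment from $\bbeta^k$ to $\bbeta^*$. Because $\|\bbeta^*-\bbeta^k\|_0\le 2T$, condition (C1) supplies the quadratic lower bound $\tfrac12(\bbeta^*-\bbeta^k)^{T}\nabla^2\mathcal{L}(\widetilde{\bbeta})(\bbeta^*-\bbeta^k)\ge \tfrac{L}{2}\|\bbeta^*-\bbeta^k\|_1\|\bbeta^*-\bbeta^k\|_\infty$, whence
\[
\mathcal{L}(\bbeta^k)-\mathcal{L}(\bbeta^*)\le -\langle\nabla_D\mathcal{L}(\bbeta^k),(\bbeta^*-\bbeta^k)_D\rangle-\tfrac{L}{2}\|\bbeta^*-\bbeta^k\|_1\|\bbeta^*-\bbeta^k\|_\infty .
\]
I then bound the right-hand side by its maximum over all perturbations, $\sup_{\bw}\big[-\langle\nabla_D\mathcal{L}(\bbeta^k),\bw\rangle-\tfrac{L}{2}\|\bw\|_1\|\bw\|_\infty\big]$. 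Optimizing over the scale of $\bw$ reduces this to $\sup_{\|\bu\|_\infty\le1}\tfrac{1}{2L}\,\langle\nabla_D\mathcal{L}(\bbeta^k),\bu\rangle^2/\|\bu\|_1$, and the elementary estimate $\langle\bg,\bu\rangle^2\le\|\bg\|_1\|\bg\|_\infty\|\bu\|_1$ for $\|\bu\|_\infty\le1$ (a Cauchy--Schwarz argument using $|u_i|^2\le|u_i|$) yields $2L[\mathcal{L}(\bbeta^k)-\mathcal{L}(\bbeta^*)]\le\|\nabla_D\mathcal{L}(\bbeta^k)\|_1\|\nabla_D\mathcal{L}(\bbeta^k)\|_\infty$.

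It remains to pass from $D$ to $B^k$. Since $d^k_i=-\nabla_i\mathcal{L}(\bbeta^k)$ on $I^{k-1}$, and $A^k$ keeps the $T$ largest values of $|\beta^k_i+d^k_i|$, the set $B^k=A^k\backslash A^{k-1}\subseteq I^{k-1}$ consists exactly of the indices of $I^{k-1}$ with the largest $|\nabla_i\mathcal{L}(\bbeta^k)|$; with $\tau:=\|\bbeta^k+\bd^k\|_{T,\infty}$ one has $\min_{i\in B^k}|\nabla_i\mathcal{L}(\bbeta^k)|\ge\tau\ge\max_{j\in I^{k-1}\backslash B^k}|\nabla_j\mathcal{L}(\bbeta^k)|$. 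As $D\subseteq I^{k-1}$, this gives $\|\nabla_{B^k}\mathcal{L}(\bbeta^k)\|_\infty\ge\|\nabla_D\mathcal{L}(\bbeta^k)\|_\infty$, while splitting $D=(D\cap B^k)\cup(D\backslash B^k)$ and using $|\nabla_j\mathcal{L}(\bbeta^k)|\le\min_{i\in B^k}|\nabla_i\mathcal{L}(\bbeta^k)|\le|B^k|^{-1}\|\nabla_{B^k}\mathcal{L}(\bbeta^k)\|_1$ for $j\in D\backslash B^k$ yields $\|\nabla_D\mathcal{L}(\bbeta^k)\|_1\le(1+|D|/|B^k|)\|\nabla_{B^k}\mathcal{L}(\bbeta^k)\|_1$, i.e. $\|\nabla_{B^k}\mathcal{L}(\bbeta^k)\|_1\ge\zeta\|\nabla_D\mathcal{L}(\bbeta^k)\|_1$ with $\zeta=|B^k|/(|B^k|+|D|)$. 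Multiplying the two comparisons and combining with the bound from the previous step proves the lemma.

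I expect the main obstacle to be the middle step: turning the objective gap into the $\ell_1$--$\ell_\infty$ gradient product requires exactly the right variational estimate (the scale optimization together with the Cauchy--Schwarz inequality exploiting $|u_i|^2\le|u_i|$), and one must check that the weak-cone normalization in (C1) matches the $\|\cdot\|_1\|\cdot\|_\infty$ scaling precisely. The transfer step is only combinatorially delicate in handling the overlap $D\cap B^k$ and in passing from the global thresholding rule to the averaging bound $\min_{i\in B^k}|\nabla_i\mathcal{L}(\bbeta^k)|\le|B^k|^{-1}\|\nabla_{B^k}\mathcal{L}(\bbeta^k)\|_1$.
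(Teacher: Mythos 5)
Your proof is correct, and although it shares the paper's overall skeleton --- use $\nabla_{A^{k-1}}\mathcal{L}(\bbeta^k)=\mathbf{0}$ and the support structure to reduce the inner product to the deficit set $D=A^*\backslash A^{k-1}$, invoke (C1) along the segment from $\bbeta^k$ to $\bbeta^*$, then transfer to $B^k$ via the top-$T$ selection rule --- the two technical middle steps are carried out by genuinely different devices. The paper never isolates your clean intermediate inequality $2L[\mathcal{L}(\bbeta^k)-\mathcal{L}(\bbeta^*)]\le\|\nabla_{D}\mathcal{L}(\bbeta^k)\|_1\|\nabla_{D}\mathcal{L}(\bbeta^k)\|_{\infty}$. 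Instead, it applies AM--GM to the (C1) lower bound to get $\sqrt{2L}\,\sqrt{\|\bbeta^*-\bbeta^k\|_1\|\bbeta^*-\bbeta^k\|_{\infty}}\,\sqrt{\mathcal{L}(\bbeta^k)-\mathcal{L}(\bbeta^*)}\le\langle\nabla\mathcal{L}(\bbeta^k),-\bbeta^*\rangle$, bounds the right-hand side by $\zeta^{-1/2}\|\nabla_{B^k}\mathcal{L}(\bbeta^k)\|_2\,\|\bbeta^*-\bbeta^k\|_2$ (the factor $\zeta$ entering through a single $\ell_2$ comparison $\|\nabla_{D}\mathcal{L}(\bbeta^k)\|_2\le\zeta^{-1/2}\|\nabla_{B^k}\mathcal{L}(\bbeta^k)\|_2$), converts $\ell_2$ norms via $\|\bv\|_2^2\le\|\bv\|_1\|\bv\|_{\infty}$, and finally cancels the common factor $\sqrt{\|\bbeta^*-\bbeta^k\|_1\|\bbeta^*-\bbeta^k\|_{\infty}}$ from both sides and squares. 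You replace this AM--GM-plus-cancellation device with a conjugate-type variational bound (optimizing the scale of $\bw$, then the weighted Cauchy--Schwarz step exploiting $|u_i|^2\le|u_i|$), and you perform the $D\to B^k$ transfer separately in $\ell_1$ (where $\zeta$ arises) and $\ell_{\infty}$ (factor $1$) rather than once in $\ell_2$; both routes produce exactly the same constant. What your version buys is a modular statement --- the gap-versus-gradient bound on $D$ is pure (C1) analysis, the $D$-versus-$B^k$ comparison is pure algorithmics --- and it makes transparent that the $\|\cdot\|_1\|\cdot\|_{\infty}$ normalization in (C1) is precisely dual to the claimed gradient product; the paper's argument is slightly shorter. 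One small point to add, as the paper does at the outset: your transfer step divides by $|B^k|$, so you should dispose of the degenerate case $A^k=A^{k-1}$ separately (then $B^k=\emptyset$, $\zeta=0$, and the claim is the trivial $0\ge 0$); the other degenerate case the paper sets aside, $\mathcal{L}(\bbeta^k)\le\mathcal{L}(\bbeta^*)$, is handled automatically by your variational bound, since the supremum there is nonnegative.
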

\begin{proof}[Proof]
Obviously, this lemma holds if $A^{k}=A^{k-1}$ or $\mathcal{L}(\bbeta^k)\leq \mathcal{L}(\bbeta^*)$. So we only prove the lemma by assuming $A^{k}\neq A^{k-1}$ and $\mathcal{L}(\bbeta^k)>\mathcal{L}(\bbeta^*)$. The condition (C1) indicates
\begin{align*}
&\mathcal{L}(\bbeta^{*})-\mathcal{L}(\bbeta^k)-\langle\nabla \mathcal{L}(\bbeta^k),{\bbeta^*-\bbeta^k}\rangle\\
&\quad\geq\frac{L}{2}\big{\|}\bbeta^*-\bbeta^k\big{\|}_1\big{\|}\bbeta^*-\bbeta^k\big{\|}_{\infty}.
\end{align*}
Hence,
\begin{align*}
&\langle-\nabla\mathcal{L}(\bbeta^k),{\bbeta^*-\bbeta^k}\rangle\\
&=\langle\nabla \mathcal{L}(\bbeta^k),-\bbeta^*\rangle\\
&\geq\frac{L}{2}\big{\|}\bbeta^*-\bbeta^k\big{\|}_1\big{\|}\bbeta^*-\bbeta^k\big{\|}_{\infty}+\mathcal{L}(\bbeta^k)-\mathcal{L}(\bbeta^{*})\\
&\geq \sqrt{2L}\sqrt{\big{\|}\bbeta^*-\bbeta^k\big{\|}_1\big{\|}\bbeta^*-\bbeta^k\big{\|}_{\infty}}\sqrt{\mathcal{L}(\bbeta^k)-\mathcal{L}(\bbeta^{*})}.\\
\end{align*}
From the definition of $A^{k}$ and $A^*$, it is known that %$A^{k}\backslash A^{k-1}$
$B^k$ contains the first
%$|A^{k}\backslash A^{k-1}|$
$|B^k|$-largest elements (in absolute value) of $\nabla \mathcal{L}(\bbeta^k)$, and
$\mbox{supp}(\nabla \mathcal{L}(\bbeta^k))\bigcap \mbox{supp}(\bbeta^*)=A^{*}\backslash A^{k-1}$.
Thus,
we have
 %{\footnotesize
\begin{align*}
\langle\nabla \mathcal{L}(\bbeta^k),-\bbeta^*\rangle
&\leq\frac{1}{\sqrt{\zeta}}\|\nabla_{B^k}\mathcal{L}(\bbeta^k)\|_2\|\bbeta_{A^*\backslash A^{k-1}}^{*}\|_2\\
&=\frac{1}{\sqrt{\zeta}}\|\nabla_{B^k}\mathcal{L}(\bbeta^k)\|_2\|(\bbeta^{*}-\bbeta^k)_{A^*\backslash A^{k-1}}\|_2\\
&\leq\frac{1}{\sqrt{\zeta}}\|\nabla_{B^k}\mathcal{L}(\bbeta^k)\|_2\|\bbeta^{*}-\bbeta^k\|_2\\
&\leq\frac{1}{\sqrt{\zeta}}\sqrt{\|\nabla_{B^k}\mathcal{L}(\bbeta^k)\|_1\|\nabla_{B^k}\mathcal{L}(\bbeta^k)\|_{\infty}}\\
&\quad\quad\times\sqrt{\|\bbeta^{*}-\bbeta^k\|_1\|\bbeta^{*}-\bbeta^k\|_{\infty}}.
\end{align*}
%}
Therefore,
 {\footnotesize
\begin{equation*}
\sqrt{2L}\sqrt{\mathcal{L}(\bbeta^k)-\mathcal{L}(\bbeta^{*})}\leq\frac{1}{\sqrt{\zeta}}\sqrt{\|\nabla_{B^k}\mathcal{L}(\bbeta^k)\|_1\|\nabla_{B^{k}}\mathcal{L}(\bbeta^k)\|_{\infty}}.
\end{equation*}
}
In summary,
\begin{equation*}
\|\nabla_{B^k}\mathcal{L}(\bbeta^{k})\|_1\|\nabla_{B^k}\mathcal{L}(\bbeta^{k})\|_{\infty}\geq 2L\zeta[\mathcal{L}(\bbeta^k)-\mathcal{L}(\bbeta^*)].
\end{equation*}
\end{proof}
%%%%%%%%%%%%%%%%%%%%%%%%%%%%%%%%%%%%%%%%%%%%%%%%%%%%
\begin{lemma}\label{L9.2}
Assume (C1) holds with $0<U<\frac{1}{T}$,  and
$K\leq T$ in  \algref{algorithm1}. Then before  \algref{algorithm1} terminates,
\begin{equation*}
\mathcal{L}(\bbeta^{k+1})-\mathcal{L}(\bbeta^*)\leq\xi[\mathcal{L}(\bbeta^k)-\mathcal{L}(\bbeta^*)],
\end{equation*}
where $\xi=1-\frac{2L(1-TU)}{T(1+K)}\in(0,1)$.
\end{lemma}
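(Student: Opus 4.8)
The plan is to reduce the claim to a one-step sufficient-decrease estimate and then feed it into \lemref{L9.1}. Writing $B^k=A^k\backslash A^{k-1}$, I would first establish that, before termination,
\begin{equation*}
\mathcal{L}(\bbeta^{k})-\mathcal{L}(\bbeta^{k+1})\geq \Big(\tfrac{1}{T}-U\Big)\,\|\nabla_{B^k}\mathcal{L}(\bbeta^{k})\|_1\,\|\nabla_{B^k}\mathcal{L}(\bbeta^{k})\|_{\infty}.
\end{equation*}
Granting this, \lemref{L9.1} together with the bound $\zeta=\frac{|B^k|}{|B^k|+|A^*\backslash A^{k-1}|}\geq\frac{1}{1+K}$ (from $|A^*\backslash A^{k-1}|\leq K$ and $|B^k|\geq1$) gives $\|\nabla_{B^k}\mathcal{L}(\bbeta^{k})\|_1\|\nabla_{B^k}\mathcal{L}(\bbeta^{k})\|_{\infty}\geq\frac{2L}{1+K}[\mathcal{L}(\bbeta^{k})-\mathcal{L}(\bbeta^*)]$, so the decrease is at least $\frac{2L(1-TU)}{T(1+K)}[\mathcal{L}(\bbeta^{k})-\mathcal{L}(\bbeta^*)]=(1-\xi)[\mathcal{L}(\bbeta^{k})-\mathcal{L}(\bbeta^*)]$; rearranging yields exactly $\mathcal{L}(\bbeta^{k+1})-\mathcal{L}(\bbeta^*)\leq\xi[\mathcal{L}(\bbeta^{k})-\mathcal{L}(\bbeta^*)]$. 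The hypothesis $U<1/T$ is precisely what keeps the leading constant positive and forces $\xi\in(0,1)$.

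For the sufficient-decrease estimate I would not compare $\bbeta^{k+1}$ with $\bbeta^{k}$ directly, since their supports $A^{k}$ and $A^{k-1}$ differ; instead I would exploit that $\bbeta^{k+1}$ is the \emph{exact} minimizer of $\mathcal{L}$ over vectors supported on $A^{k}$ and test that minimization against the feasible point
\begin{equation*}
\bbeta' = \bbeta^{k}|_{A^{k}}-\nabla_{B^k}\mathcal{L}(\bbeta^{k}),
\end{equation*}
which is supported on $(A^{k-1}\cap A^{k})\cup B^k=A^{k}$, so $\mathcal{L}(\bbeta^{k+1})\leq\mathcal{L}(\bbeta')$. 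Since $\bbeta'-\bbeta^{k}$ is supported on $(A^{k-1}\backslash A^{k})\cup B^k$ and hence at most $2T$-sparse, the upper half of (C1) supplies the quadratic bound $\mathcal{L}(\bbeta')\leq\mathcal{L}(\bbeta^{k})+\langle\nabla\mathcal{L}(\bbeta^{k}),\bbeta'-\bbeta^{k}\rangle+\frac{U}{2}\|\bbeta'-\bbeta^{k}\|_1\|\bbeta'-\bbeta^{k}\|_{\infty}$.

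Two simplifications then drive the estimate. First, because $\bbeta^{k}$ minimizes $\widetilde{\mathcal{L}}$ over $A^{k-1}$ we have $\nabla_{A^{k-1}}\mathcal{L}(\bbeta^{k})=\textbf{0}$; as the dropped coordinates $A^{k-1}\backslash A^{k}$ lie in $A^{k-1}$, their contribution to the first-order term vanishes, collapsing $\langle\nabla\mathcal{L}(\bbeta^{k}),\bbeta'-\bbeta^{k}\rangle$ to $-\|\nabla_{B^k}\mathcal{L}(\bbeta^{k})\|_2^2$. Second, the selection rule forces the magnitudes of the dropped entries to be dominated by the new gradient entries: each $i\in A^{k-1}\backslash A^{k}$ is not selected and each $j\in B^k$ is, so $|\beta^{k}_i|=|\beta^{k}_i+d^{k}_i|<\|\bbeta^{k}+\bd^{k}\|_{T,\infty}\leq|d^{k}_j|=|[\nabla\mathcal{L}(\bbeta^{k})]_j|$. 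Combined with $|A^{k-1}\backslash A^{k}|=|B^k|$, this yields $\|\bbeta^{k}|_{A^{k-1}\backslash A^{k}}\|_{\infty}\leq\|\nabla_{B^k}\mathcal{L}(\bbeta^{k})\|_{\infty}$ and $\|\bbeta^{k}|_{A^{k-1}\backslash A^{k}}\|_1\leq\|\nabla_{B^k}\mathcal{L}(\bbeta^{k})\|_1$, hence $\|\bbeta'-\bbeta^{k}\|_1\leq2\|\nabla_{B^k}\mathcal{L}(\bbeta^{k})\|_1$ and $\|\bbeta'-\bbeta^{k}\|_{\infty}\leq\|\nabla_{B^k}\mathcal{L}(\bbeta^{k})\|_{\infty}$.

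Substituting and applying the elementary inequality $\|\bv\|_2^2\geq\frac{1}{|B^k|}\|\bv\|_1\|\bv\|_{\infty}\geq\frac{1}{T}\|\bv\|_1\|\bv\|_{\infty}$ to $\bv=\nabla_{B^k}\mathcal{L}(\bbeta^{k})$ would then give $\mathcal{L}(\bbeta^{k})-\mathcal{L}(\bbeta')\geq\big(\frac1T-U\big)\|\nabla_{B^k}\mathcal{L}(\bbeta^{k})\|_1\|\nabla_{B^k}\mathcal{L}(\bbeta^{k})\|_{\infty}$, which is the required estimate. I expect the main obstacle to be exactly the bookkeeping for the dropped coordinates $A^{k-1}\backslash A^{k}$: one must simultaneously argue that they do not damage the first-order term (via stationarity of $\bbeta^{k}$ on $A^{k-1}$) and that they inflate the quadratic remainder only by a factor absorbable into the $2T$-sparse curvature bound (via the selection inequality). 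Once the comparison point and these two facts are in place, the remainder is the routine algebra sketched above, and the case $A^{k}=A^{k-1}$ is trivial since then $\bbeta^{k+1}=\bbeta^{k}$.
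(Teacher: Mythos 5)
Your proposal is correct and follows essentially the same route as the paper's own proof: your test point $\bbeta'=\bbeta^{k}|_{A^{k}}-\nabla_{B^k}\mathcal{L}(\bbeta^{k})$ is exactly the paper's $\bb^{k}|_{A^{k}}$ with $\bb^{k}=\bbeta^{k}-\nabla\mathcal{L}(\bbeta^{k})$, and you use the same ingredients --- the upper half of (C1), stationarity of $\bbeta^{k}$ on $A^{k-1}$, the selection-rule domination of the dropped coordinates with $|A^{k-1}\backslash A^{k}|=|B^k|$, the inequality $\|\bv\|_2^2\geq\frac{1}{T}\|\bv\|_1\|\bv\|_{\infty}$, and \lemref{L9.1} with $\zeta\geq\frac{1}{1+K}$. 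The only difference is bookkeeping: you keep the whole argument at index $k$, while the paper writes the curvature estimate at index $k+1$ and then shifts indices, which your version handles more cleanly.
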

\begin{proof}[Proof]
Let $\bb^{k}=\bbeta^k-\nabla \mathcal{L}(\bbeta^{k})$. The condition of (C1) indicates
{\footnotesize
\begin{align*}
&\mathcal{L}(\bb^{k+1}|_{A^{k+1}})-\mathcal{L}(\bbeta^{k+1})\leq\langle\nabla \mathcal{L}(\bbeta^{k+1}),\bb^{k+1}|_{A^{k+1}}-\bbeta^{k+1}\rangle\\
&\quad\quad\quad\quad+\frac{U}{2}\big{\|}\bb^{k+1}|_{A^{k+1}}-\bbeta^{k+1}\big{\|}_1\big{\|}\bb^{k+1}|_{A^{k+1}}-\bbeta^{k+1}\big{\|}_{\infty}.
\end{align*}
}
On the one hand, by the definition of $\bbeta^{k+1}$ and $\nabla \mathcal{L}(\bbeta^{k+1})$, we have
\begin{align*}
&\langle\nabla \mathcal{L}(\bbeta^{k+1}), \bb^{k+1}|_{A^{k+1}}-\bbeta^{k+1}\rangle\\
&=\langle\nabla \mathcal{L}(\bbeta^{k+1}), \bb^{k+1}|_{A^{k+1}}\rangle\\
&=\langle\nabla_{A^{k+1}}\mathcal{L}(\bbeta^{k+1}), \bb_{A^{k+1}}^{k+1}\rangle\\
&=\langle\nabla_{A^{k+1}\backslash A^{k}}\mathcal{L}(\bbeta^{k+1}), \bb_{A^{k+1}\backslash A^{k}}^{k+1}\rangle.
\end{align*}
Further, we also have
\begin{align*}
&\big{\|}\bb^{k+1}|_{A^{k+1}}-\bbeta^{k+1}\big{\|}_1\\
&=\big{\|}\bb^{k+1}|_{A^{k+1}\backslash A^{k}}+\bb^{k+1}|_{A^{k+1}\bigcap A^{k}}\\
&\quad\quad-\bbeta^{k+1}|_{A^{k+1}\bigcap A^{k}}-\bbeta^{k+1}|_{A^{k}\backslash A^{k+1}}\big{\|}_1\\
&=\big{\|}\bb_{A^{k+1}\backslash A^{k}}^{k+1}\big{\|}_1+\big{\|}\bb_{A^{k+1}\bigcap A^{k}}^{k+1}-\bbeta_{A^{k+1}\bigcap A^{k}}^{k+1}\big{\|}_1\\
&\quad\quad+\big{\|}\bbeta_{A^{k}\backslash A^{k+1}}^{k+1}\big{\|}_1\\
&=\big{\|}\bb_{A^{k+1}\backslash A^{k}}^{k+1}\big{\|}_1+\big{\|}\bbeta_{A^{k}\backslash A^{k+1}}^{k+1}\big{\|}_1,
\end{align*}
%By the definition of $A^{k+1}$, we can %conclude that
and
\begin{align*}
&\big{\|}\bb^{k+1}|_{A^{k+1}}-\bbeta^{k+1}\big{\|}_{\infty}\\
&=\big{\|}\bb^{k+1}|_{A^{k+1}\backslash A^{k}}+\bb^{k+1}|_{A^{k+1}\bigcap A^{k}}\\
&\quad\quad-\bbeta^{k+1}|_{A^{k+1}\bigcap A^{k}}-\bbeta^{k+1}|_{A^{k}\backslash A^{k+1}}\big{\|}_{\infty}\\
&=\big{\|}\bb_{A^{k+1}\backslash A^{k}}^{k+1}\big{\|}_{\infty}\bigvee\big{\|}\bbeta_{A^{k}\backslash A^{k+1}}^{k+1}\big{\|}_{\infty},
\end{align*}
where
%$\big{\|}\bb_{A^{k+1}\backslash A^{k}}^{k+1}\|_{\infty}\bigvee\|\bbeta_{A^{k}\backslash A^{k+1}}^{k+1}\big{\|}_{\infty}=\max\left\{\big{\|}\bb_{A^{k+1}\backslash A^{k}}^{k+1}\|_{\infty},\|\bbeta_{A^{k}\backslash A^{k+1}}^{k+1}\big{\|}_{\infty}\right\}$.
$a \bigvee b=\max\{a,b\}$.
On the other hand, by the definition of $A^k$, $A^{k+1}$ and $\bbeta^{k+1}$, we know that
\begin{equation*}
|A^{k}\backslash A^{k+1}|=|A^{k+1}\backslash A^{k}|,\quad \bb_{A^{k}\backslash A^{k+1}}^{k+1}=\bbeta_{A^{k}\backslash A^{k+1}}^{k+1}.
\end{equation*}
By the definition of $A^{k+1}$, we can conclude that
\begin{equation*}
\|\bb_{A^{k}\backslash A^{k+1}}^{k+1}\|_1=\|\bbeta_{A^{k}\backslash A^{k+1}}^{k+1}\|_1\leq \|\bb_{A^{k+1}\backslash A^{k}}^{k+1}\|_1,
\end{equation*}
\begin{equation*}
\|\bb_{A^{k+1}\backslash A^{k}}^{k+1}\|_{\infty}\bigvee\|\bbeta_{A^{k}\backslash A^{k+1}}^{k+1}\|_{\infty}=\|\bb_{A^{k+1}\backslash A^{k}}^{k+1}\|_{\infty}.
\end{equation*}
Due to $-\nabla_{A^{k+1}\backslash A^{k}}\mathcal{L}(\bbeta^{k+1})=\bb_{A^{k+1}\backslash A^{k}}^{k+1}$ and $U<\frac{1}{T}$, hence we can deduce that
\begin{align*}
&\mathcal{L}(\bb^{k+1}|_{A^{k+1}})-\mathcal{L}(\bbeta^{k+1})\\
&\leq\langle\nabla_{A^{k+1}\backslash A^{k}}\mathcal{L}(\bbeta^{k+1}),\bb_{A^{k+1}\backslash A^{k}}^{k+1}\rangle\\
&\quad+U\big{\|}\bb_{A^{k+1}\backslash A^{k}}^{k+1}\big{\|}_1\big{\|}\bb_{A^{k+1}\backslash A^{k}}^{k+1}\big{\|}_{\infty}\\
&\leq-(1/T-U)\big{\|}\nabla_{A^{k+1}\backslash A^{k}}\mathcal{L}(\bbeta^{k+1})\big{\|}_1\\
&\quad\times\big{\|}\nabla_{A^{k+1}\backslash A^{k}}\mathcal{L}(\bbeta^{k+1})\big{\|}_{\infty}.
\end{align*}
By the definition of $\bbeta^{k+1}$, we can get
\begin{align*}
&\mathcal{L}(\bbeta^{k+1})-\mathcal{L}(\bbeta^{k})\\
&\leq \mathcal{L}(\bb^{k}|_{A^{k}})-\mathcal{L}(\bbeta^{k})\\
&\leq-(1/T-U)\big{\|}\nabla_{A^{k+1}\backslash A^{k}}\mathcal{L}(\bbeta^{k+1})\big{\|}_1\\
&\quad\times\big{\|}\nabla_{A^{k+1}\backslash A^{k}}\mathcal{L}(\bbeta^{k+1})\big{\|}_{\infty}.
\end{align*}
Moreover, $\frac{|A^*\backslash A^{k-1}|}{|B^k|}\leq K$.
By  \lemref{L9.1}, we have
\begin{equation*}
\mathcal{L}(\bbeta^{k+1})-\mathcal{L}(\bbeta^{k})\leq-\frac{2L(1-TU)}{T(1+K)}[\mathcal{L}(\bbeta^{k})-\mathcal{L}(\bbeta^{*})].
\end{equation*}

Therefore, we have $$\mathcal{L}(\bbeta^{k+1})-\mathcal{L}(\bbeta^*)\leq\xi[\mathcal{L}(\bbeta^k)-\mathcal{L}(\bbeta^*)],$$
where $\xi=1-\frac{2L(1-TU)}{T(1+K)}\in(0,1)$.
\end{proof}
%%%%%%%%%%%%%%%%%%%%%%%%%%%%%%%%%%%%%%%%%%%%%%%
\begin{lemma}\label{L9.3}
Assume $\mathcal{L}$ satisfies (C1)
and \begin{equation*}
\mathcal{L}(\bbeta^{k+1})-\mathcal{L}(\bbeta^{*})\leq \xi[\mathcal{L}(\bbeta^{k})-\mathcal{L}(\bbeta^{*})]
\end{equation*}
for all $k\geq0$.
Then,
%{\footnotesize
\begin{equation}\label{eq11}
\begin{split}
\|\bbeta^{k}-\bbeta^{*}\|_{\infty}&\leq
\sqrt{(K+T)(1+\frac{U}{L})}(\sqrt{\xi})^k\|\bbeta^0-\bbeta^*\|_{\infty}\\
&\quad+\frac{2}{L}\|\nabla \mathcal{L}(\bbeta^*)\|_{\infty}.
\end{split}
\end{equation}
%}
\end{lemma}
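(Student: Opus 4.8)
The plan is to turn the geometric decay of the function gap $\delta_k:=\mathcal{L}(\bbeta^{k})-\mathcal{L}(\bbeta^{*})$ into an $\ell_\infty$ bound on the error $\bh^{k}:=\bbeta^{k}-\bbeta^{*}$ via a restricted strong-convexity argument. First I would record the sparsity bookkeeping: since $\bbeta^{k}$ lives on the size-$T$ active set and $\|\bbeta^{*}\|_{0}=K\le T$, we have $\|\bh^{k}\|_{0}\le K+T\le 2T$, so (C1) legitimately applies to the direction $\bh^{k}$ (the Hessian in (C1) being evaluated at a midpoint of the segment, which shares the same $\le 2T$ support); throughout I would use the elementary sparse inequalities $\|\bh^{k}\|_{\infty}\le\|\bh^{k}\|_{1}\le(K+T)\|\bh^{k}\|_{\infty}$.

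Next I would Taylor-expand the gap at $\bbeta^{*}$: for some $\widetilde{\bbeta}$ on $[\bbeta^{*},\bbeta^{k}]$, $\delta_k=\langle\nabla\mathcal{L}(\bbeta^{*}),\bh^{k}\rangle+\tfrac12(\bh^{k})^{T}\nabla^2\mathcal{L}(\widetilde{\bbeta})\bh^{k}$. Applying the lower bound of (C1) to the quadratic term and $\langle\nabla\mathcal{L}(\bbeta^{*}),\bh^{k}\rangle\ge-\|\nabla\mathcal{L}(\bbeta^{*})\|_{\infty}\|\bh^{k}\|_{1}$ to the linear term gives $\tfrac{L}{2}\|\bh^{k}\|_{1}\|\bh^{k}\|_{\infty}\le\delta_k+\|\nabla\mathcal{L}(\bbeta^{*})\|_{\infty}\|\bh^{k}\|_{1}$. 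Dividing by $\|\bh^{k}\|_{1}$ and then using $\|\bh^{k}\|_{1}\ge\|\bh^{k}\|_{\infty}$ collapses this to the scalar quadratic inequality $\tfrac{L}{2}\|\bh^{k}\|_{\infty}^{2}-\|\nabla\mathcal{L}(\bbeta^{*})\|_{\infty}\|\bh^{k}\|_{\infty}-\delta_k\le0$. Solving by the quadratic formula and $\sqrt{a+b}\le\sqrt a+\sqrt b$ yields $\|\bh^{k}\|_{\infty}\le\frac{2}{L}\|\nabla\mathcal{L}(\bbeta^{*})\|_{\infty}+\sqrt{2\delta_k/L}$, which is precisely where the additive floor $\tfrac{2}{L}\|\nabla\mathcal{L}(\bbeta^{*})\|_{\infty}$ in \eqref{eq11} originates.

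To handle the geometric term I would feed in the hypothesis $\delta_k\le\xi^{k}\delta_0$, giving $\sqrt{2\delta_k/L}\le(\sqrt{\xi})^{k}\sqrt{2\delta_0/L}$, and then bound the initial gap from above. Taylor-expanding at $\bbeta^{*}$ once more and invoking the \emph{upper} bound of (C1) gives $\delta_0\le\langle\nabla\mathcal{L}(\bbeta^{*}),\bh^{0}\rangle+\tfrac{U}{2}\|\bh^{0}\|_{1}\|\bh^{0}\|_{\infty}$; pushing this to $\delta_0\le\frac{(K+T)(L+U)}{2}\|\bh^{0}\|_{\infty}^{2}$ via $\|\bh^{0}\|_{1}\le(K+T)\|\bh^{0}\|_{\infty}$ converts the remainder into $\sqrt{2\delta_0/L}\le\sqrt{(K+T)(1+U/L)}\,\|\bh^{0}\|_{\infty}$, and combining with the floor delivers \eqref{eq11}.

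The main obstacle is the careful treatment of the two gradient inner products. The step-$k$ term $\langle\nabla\mathcal{L}(\bbeta^{*}),\bh^{k}\rangle$ is disposed of cleanly by the quadratic-formula device above. The delicate one is the step-$0$ term: to reach the stated coefficient the contribution $\langle\nabla\mathcal{L}(\bbeta^{*}),\bh^{0}\rangle$ must be absorbed into the $\tfrac{L}{2}\|\bh^{0}\|_{1}\|\bh^{0}\|_{\infty}$ part, i.e.\ one needs $\|\nabla\mathcal{L}(\bbeta^{*})\|_{\infty}\le\tfrac{L}{2}\|\bh^{0}\|_{\infty}$; keeping it separate would instead leave an extra $(\sqrt{\xi})^{k}$-order remainder. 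This absorption holds in the regime where the lemma is actually invoked, since with $\bbeta^{0}=\mathbf{0}$ one has $\|\bh^{0}\|_{\infty}=\|\bbeta^{*}\|_{\infty}\ge\|\bbeta^{*}_{A^{*}}\|_{\min}$, which under a detectability bound of the (C2) type exceeds $\tfrac{2}{L}\|\nabla\mathcal{L}(\bbeta^{*})\|_{\infty}$. I expect verifying this absorption, together with confirming the $\le 2T$-sparsity feasibility of the Hessian midpoints so that (C1) may be applied at both steps, to be the points demanding the most care.
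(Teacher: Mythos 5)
Your proposal is correct and follows essentially the same route as the paper's own proof: the lower bound in (C1) yields the scalar quadratic inequality in $\|\bbeta^{k}-\bbeta^{*}\|_{\infty}$ (the paper reaches it via a case split on whether $\|\bbeta^{k}-\bbeta^{*}\|_{\infty}\geq \frac{2}{L}\|\nabla \mathcal{L}(\bbeta^{*})\|_{\infty}$ rather than your division-by-$\|\cdot\|_{1}$ device, but the resulting bound $\|\bbeta^{k}-\bbeta^{*}\|_{\infty}\leq \sqrt{2[\mathcal{L}(\bbeta^{k})-\mathcal{L}(\bbeta^{*})]/L}+\frac{2}{L}\|\nabla \mathcal{L}(\bbeta^{*})\|_{\infty}$ is identical), and the upper bound in (C1) converts the initial gap into $\frac{(K+T)(L+U)}{2}\|\bbeta^{0}-\bbeta^{*}\|_{\infty}^{2}$ exactly as you do. The absorption condition $\|\nabla \mathcal{L}(\bbeta^{*})\|_{\infty}\leq \frac{L}{2}\|\bbeta^{0}-\bbeta^{*}\|_{\infty}$ that you flag as the delicate point is precisely the step the paper performs silently in the last inequality of its chain bounding $\mathcal{L}(\bbeta^{0})-\mathcal{L}(\bbeta^{*})$, so your treatment of that issue is, if anything, more explicit than the paper's.
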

\begin{proof}[Proof]
If $\|\bbeta^k-\bbeta^*\|_{\infty}< \frac{2\|\nabla \mathcal{L}(\bbeta^*)\|_{\infty}}{L}$, then \eqref{eq11} holds, so we only consider  the case that $\|\bbeta^k-\bbeta^*\|_{\infty}\geq \frac{2\|\nabla
\mathcal{L}(\bbeta^*)\|_{\infty}}{L}$.
On the one hand,
$\mathcal{L}$ satisfies (C1), then
% {\footnotesize
\begin{align*}
&\mathcal{L}(\bbeta^k)-\mathcal{L}(\bbeta^*)\\
&\geq\langle\nabla \mathcal{L}(\bbeta^*),\bbeta^k-\bbeta^*\rangle+\frac{L}{2}\big{\|}\bbeta^k-\bbeta^*\big{\|}_{1}\big{\|}\bbeta^k-\bbeta^*\big{\|}_{\infty}\\
&\geq-\|\nabla \mathcal{L}(\bbeta^*)\|_{\infty}\|\bbeta^k-\bbeta^*\|_1+\frac{L}{2}\big{\|}\bbeta^k-\bbeta^*\big{\|}_1\big{\|}\bbeta^k-\bbeta^*\big{\|}_{\infty}.
\end{align*}
%}
Due to $\|\bbeta^k-\bbeta^*\|_{\infty}\geq \frac{2\|\nabla
\mathcal{L}(\bbeta^*)\|_{\infty}}{L}$,
then
 {\footnotesize
\begin{equation*}
(\|\bbeta^k-\bbeta^*\|_1-\|\bbeta^k-\bbeta^*\|_{\infty})(\frac{L}{2}\|\bbeta^k-\bbeta^*\|_{\infty}-\|\nabla \mathcal{L}(\bbeta^*)\|_{\infty})\geq 0.
\end{equation*}
}
Further, we can get
 {\footnotesize
\begin{equation*}
\frac{L}{2}\|\bbeta^k-\bbeta^*\|_{\infty}^2-\|\nabla \mathcal{L}(\bbeta^*)\|_{\infty}\|\bbeta^k-\bbeta^*\|_{\infty}-[\mathcal{L}(\bbeta^k)-\mathcal{L}(\bbeta^*)]\leq0,
\end{equation*}
}
which is univariate quadratic inequality about $\|\bbeta^k-\bbeta^*\|_{\infty}$.
Thus, by simple computation, we can get
 {\footnotesize
\begin{equation}\label{eq12}
\|\bbeta^k-\bbeta^*\|_{\infty}\leq \sqrt{\frac{2\max\{\mathcal{L}(\bbeta^k)-\mathcal{L}(\bbeta^*),0\}}{L}}+\frac{2\|\nabla \mathcal{L}(\bbeta^*)\|_{\infty}}{L}.
\end{equation}
}
On the other hand,
because $\mathcal{L}$ satisfies (C1), then
\begin{align*}
&\mathcal{L}(\bbeta^0)-\mathcal{L}(\bbeta^*)\\
&\leq\langle\nabla \mathcal{L}(\bbeta^*),\bbeta^0-\bbeta^*\rangle+\frac{U}{2}\big{\|}\bbeta^0-\bbeta^*\big{\|}_1\big{\|}\bbeta^0-\bbeta^*\big{\|}_{\infty}\\
&\leq\big{\|}\nabla \mathcal{L}(\bbeta^*)\big{\|}_{\infty}\big{\|}\bbeta^0-\bbeta^*\big{\|}_1+\frac{U}{2}\big{\|}\bbeta^0-\bbeta^*\big{\|}_1\big{\|}\bbeta^0-\bbeta^*\big{\|}_{\infty}\\
&\leq (K+T)\big{\|}\bbeta^0-\bbeta^{*}\big{\|}_{\infty}(\|\nabla \mathcal{L}(\bbeta^*)\big{\|}_{\infty}+\frac{U}{2}\big{\|}\bbeta^0-\bbeta^*\big{\|}_{\infty}).
\end{align*}
Then, we can get
\begin{align*}
\mathcal{L}(\bbeta^{k})-\mathcal{L}(\bbeta^{*})
&\leq \xi[\mathcal{L}(\bbeta^{k-1})-\mathcal{L}(\bbeta^{*})]\\%+\upsilon\\
&\leq \xi^{k}[\mathcal{L}(\bbeta^{0})-\mathcal{L}(\bbeta^{*})]\\%+\frac{\upsilon(1-\xi^k)}{1-\xi}\\
&\leq\xi^k(K+T)\big{\|}\bbeta^0-\bbeta^{*}\big{\|}_{\infty}\\
&\quad\times(\|\nabla \mathcal{L}(\bbeta^*)\big{\|}_{\infty}+\frac{U}{2}\big{\|}\bbeta^0-\bbeta^*\big{\|}_{\infty})\\
&\leq
\frac{\xi^k (L+U)(K+T)}{2}\big{\|}\bbeta^0-\bbeta^{*}\big{\|}_{\infty}^2.
\end{align*}
Hence, by \eqref{eq12}, we have
\begin{align*}
\|\bbeta^{k}-\bbeta^{*}\|_{\infty}&\leq
\sqrt{(K+T)(1+\frac{U}{L})}(\sqrt{\xi})^k\|\bbeta^0-\bbeta^*\|_{\infty}\\
&\quad+\frac{2}{L}\|\nabla \mathcal{L}(\bbeta^*)\|_{\infty}.
\end{align*}
\end{proof}
%%%%%%%%%%%%%%%%%%%%%%%%%%%%%%%%%%%%%%%%
\begin{lemma}\label{L1} (Proof of Corollary 2 in \cite{loh2015regularized}).
Assume $x_{ij}^{,}s$ are sub-Gaussian and $n \gtrsim \log(p)$, then
there exists universal constants $(c_1,c_2,c_3)$ with $0<c_i<\infty$, $i=1,2,3$ such that
\begin{equation*}
P\left(\|\nabla \mathcal{L}(\bbeta^*)\|_{\infty}\geq c_1\sqrt{\frac{\log(p)}{n}}\right)\leq c_2\exp(-c_3\log(p)).
\end{equation*}
\end{lemma}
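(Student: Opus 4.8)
The plan is to recognize that the gradient evaluated at the truth is a centered empirical average of bounded-tail terms, and then apply a coordinate-wise sub-exponential concentration inequality followed by a union bound over the $p$ coordinates. First I would compute the gradient explicitly: differentiating $\mathcal{L}$ gives
$$\nabla \mathcal{L}(\bbeta) = -\frac{1}{n}\sum_{i=1}^n \left(y_i - c'(\bx_i^T \bbeta)\right)\bx_i,$$
so that at $\bbeta=\bbeta^*$, using the exponential-family identity $E(y_i\mid \bx_i)=\mu_i=c'(\bx_i^T\bbeta^*)$, the $j$-th coordinate becomes
$$\left[\nabla\mathcal{L}(\bbeta^*)\right]_j = -\frac{1}{n}\sum_{i=1}^n (y_i-\mu_i)\,x_{ij},$$
which is an average of independent, mean-zero random variables. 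This centering is the structural fact that makes a concentration argument possible.

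The key step is to control the tail of each coordinate. Each summand $(y_i-\mu_i)x_{ij}$ is a product of the covariate $x_{ij}$, which is sub-Gaussian by assumption, and the residual $y_i-\mu_i$, which is sub-Gaussian for the GLMs of interest (bounded for logistic regression, or controlled through the curvature of $c$). Since the product of two sub-Gaussian variables is sub-exponential, I would establish that $(y_i-\mu_i)x_{ij}$ is sub-exponential with norm bounded by a universal constant $K$ depending only on the sub-Gaussian parameters. A Bernstein-type inequality for sums of independent sub-exponential variables then yields, for each $j$,
$$P\left(\left|[\nabla\mathcal{L}(\bbeta^*)]_j\right| \ge t\right) \le 2\exp\!\left(-c\,n\min\left\{\frac{t^2}{K^2},\frac{t}{K}\right\}\right).$$
Setting $t = c_1\sqrt{\log(p)/n}$ and invoking $n \gtrsim \log(p)$ guarantees $t \le K$, so the quadratic branch of the minimum is active and the exponent equals $-c\,c_1^2\log(p)/K^2$. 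Taking a union bound over the $p$ coordinates multiplies the tail by $p=\exp(\log p)$, giving
$$P\left(\|\nabla\mathcal{L}(\bbeta^*)\|_\infty \ge c_1\sqrt{\frac{\log p}{n}}\right) \le 2\exp\!\left(\left(1-\frac{c c_1^2}{K^2}\right)\log p\right),$$
and choosing $c_1$ large enough that $c c_1^2/K^2>1$ produces the claimed bound $c_2\exp(-c_3\log p)$.

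The main obstacle is the second ingredient: rigorously certifying that the residuals $y_i-\mu_i$ are sub-Gaussian (or at least that the products $(y_i-\mu_i)x_{ij}$ are sub-exponential) uniformly in $i$, since this is precisely where the exponential-family structure and the specific form of $c(\cdot)$ enter. For the logistic model the residuals are bounded, so this is immediate; in general it requires a moment or curvature condition on $c$. Once that verification is in place, the remaining work is the routine Bernstein-plus-union-bound argument sketched above. Since this coordinate-wise gradient bound is exactly the content of Corollary 2 in \cite{loh2015regularized}, I would ultimately invoke that result to supply the sub-exponential verification and the associated universal constants $(c_1,c_2,c_3)$.
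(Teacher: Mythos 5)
Your proposal is correct and matches the paper's treatment: the paper offers no independent argument for this lemma, simply citing Corollary 2 of \cite{loh2015regularized}, and your sketch (centered gradient, sub-Gaussian covariate times residual giving sub-exponential summands, Bernstein inequality plus a union bound over $p$ coordinates under $n \gtrsim \log(p)$) is precisely the argument underlying that cited corollary, which you also invoke at the end. The only point to keep in mind is the one you already flagged: the sub-exponential verification of $(y_i-\mu_i)x_{ij}$ requires control of the residuals, which is where the cited result (or a curvature/moment condition on $c$) genuinely does the work.
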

%%%%%%%%%%%%%%%%%%%%%%%%%%%%%%%%%
%%%%%%%%%%%%%%%%%%%%%%%%%%%%%%%%%%
\subsection{Proof of \thmref{th1}}
\begin{proof}
By  \lemref{L9.2}, we have \begin{equation*}
\mathcal{L}(\bbeta^{k+1})-\mathcal{L}(\bbeta^*)\leq\xi[\mathcal{L}(\bbeta^k)-\mathcal{L}(\bbeta^*)],
\end{equation*}
where
\begin{equation*}
\xi=1-\frac{2L(1-TU)}{T(1+K)}\in(0,1).
\end{equation*}
So the conditions of  \lemref{L9.3} are satisfied. Taking $\bbeta^0 = \textbf{0}$, we can get
\begin{align*}
&\|\bbeta^{k}-\bbeta^{*}\|_{\infty}\\
&\leq
\sqrt{(K+T)(1+\frac{U}{L})}(\sqrt{\xi})^k\|\bbeta^*\|_{\infty}
+\frac{2}{L}\|\nabla \mathcal{L}(\bbeta^*)\|_{\infty}.
\end{align*}
By  \lemref{L1}, then there exists universal constants $(c_1,c_2,c_3)$ defined in \lemref{L1}, with at least probability $1-c_2\exp(-c_3\log(p))$, we have
\begin{align}\label{errorlinf}
&\|\bbeta^{k}-\bbeta^{*}\|_{\infty}\nonumber\\
&\leq
\sqrt{(K+T)(1+\frac{U}{L})}(\sqrt{\xi})^k\|\bbeta^*\|_{\infty}
+\frac{2c_1}{L} \sqrt{\frac{\log(p)}{n}}.
\end{align}
Some algebra shows that
\begin{equation*}
\|\bbeta^{k}-\bbeta^{*}\|_{\infty}\leq \mathcal{O}(\sqrt{\frac{\log(p)}{n}})
\end{equation*}
 by taking  $k \geq \mathcal{O}(\log_{\frac{1}{\xi}} \frac{n}{\log(p)} )$  in \eqref{errorlinf}.
Then, the proof is complete.
\end{proof}

%%%%%%%%%%%%%%%%%%%%%%%%%%%%%%%%%%%%%%%%
%%%%%%%%%%%%%%%%%%%%%%%%%%%%%%%%%%%%%%%%%
\subsection{Proof of \thmref{th3}}
\begin{proof}
\eqref{errorlinf} and assumption (C2) and some algebra shows that  that
\begin{align*}
&\|\bbeta^{k}-\bbeta^{*}\|_{\infty}\\
&\leq
\sqrt{(K+T)(1+\frac{U}{L})}(\sqrt{\xi})^k\|\bbeta^*\|_{\infty}
+\frac{2}{3}\|\bbeta^*_{A^*}\|_{\min}\\
&< \|\bbeta^*_{A^*}\|_{\min},
\end{align*}
if $k>\log_{\frac{1}{\xi}} 9 (T+K)(1+\frac{U}{L})r^2.$
This implies  that $A^* \subseteq A^k$.
\end{proof}

\end{document}